\newif\ifarxiv
\arxivtrue

\documentclass[letterpaper,11pt]{article}
\usepackage{fullpage}
\usepackage[colorlinks,citecolor=blue]{hyperref}

\usepackage[color,secparam=$\lambda$]{crypto/crypto}
\usepackage{times}
\usepackage{amsthm}

\newtheorem{definition}{Definition}

\newtheorem{theorem}{Theorem}
\newtheorem{assumption}{Assumption}

\newtheorem{lemma}{Lemma}
\usepackage{caption,subcaption}
\usepackage{graphicx}
\usepackage{amsfonts}
\usepackage{float}
\usepackage{bm}
\usepackage{enumitem}
\usepackage{algorithm2e}
\usepackage{multirow}
\usepackage{amsmath}

\newif\ifusenix
\usenixtrue

\newcommand{\sharpness}{directional sharpness\xspace}
\newcommand{\Sharpness}{Directional sharpness\xspace}
\newcommand{\SHARPNESS}{Directional Sharpness\xspace}
\newcommand{\data}{\ensuremath{D}\xspace}

\author{
 {\rm Gefei Tan}\\
 Northwestern University \& Google \\
  {\tt\small gefeitan@u.northwestern.edu}
 \and 
 {\rm Adri\`a Gasc\'on}\\
 Google \\
 {\tt\small adriag@google.com}
 \and 
 {\rm Sarah Meiklejohn}\thanks{Portions of this work were done while the author was at Google.}\\
 University College London \\
 {\tt\small s.meiklejohn@ucl.ac.uk}
 \and 
 {\rm Mariana Raykova}\\
 Google\\
 {\tt\small marianar@google.com}
}

\begin{document}
\date{}
\ifarxiv
\title{Certification of Machine Learning Models via Directional Sharpness}
\else
\title{\Large \bf Certification of Machine Learning Models via Directional Sharpness}
\fi
\maketitle

\begin{abstract}
In machine learning, model certification has been identified as an important method for gaining assurance about a model's trustworthiness and quality. A model’s quality is largely determined by its ability to generalize, i.e., to perform well on data beyond what it was trained on. It is not possible to certify generalization directly, however, as it depends on unknown data and is not directly measurable. Proxies such as test accuracy can be misleading when the training process is perturbed (intentionally or accidentally), 
and metrics such as \emph{sharpness}---which has an empirically supported link to generalization---are computationally expensive and can also serve as unreliable signals when training deviates from a prescribed procedure. In this work, we propose \emph{directional sharpness}, a metric designed to efficiently and reliably indicate generalization despite potential training deviations. 
We provide empirical and analytical evidence that directional sharpness (1) correlates more strongly with generalization than existing metrics and (2) identifies models with poor generalization more reliably than existing metrics. Furthermore, directional sharpness is efficiently computable in model auditing settings, where the verifier has access to training data, and via zero-knowledge proofs that certify quality without revealing training data.

\end{abstract}

\section{Introduction}
\label{sec:intro}

There are many properties of machine learning (ML) models that model providers and external stakeholders may wish to verify before those models are deployed.
Arguably, the most universal property for such \emph{model certification} is the \emph{quality} of the model, in terms of its performance on downstream tasks.
A natural and widely accepted notion of model quality is \emph{generalization}~\cite{PACbook,gen_bound_book}: the ability of the model to perform well on new, unseen data.

In practice, model quality is typically estimated using a holdout test set, and a model is said to ``generalize well'' if it achieves high test accuracy. Test accuracy is not a reliable metric, however, as test sets can contain errors~\cite{northcutt2021pervasive}, fail to detect overfitting~\cite{dwork2015preserving,geirhos2020shortcut,gururangan2018annotation}, or miss real-world distribution shifts~\cite{recht2019imagenet,koh2021wilds,barz2020we,tampu2022inflation}.
In adversarial settings, research has shown that a model can achieve high accuracy on test data while being vulnerable to adversarial inputs~\cite{goodfellow2015explaining} or containing backdoors that activate only on specific triggers~\cite{badnet,9802938,gao2020backdoor}.

In statistical learning theory, generalization is captured by the gap between the empirical risk on the training set and population risk on fresh samples from the same (unknown) data distribution. Ideally, a certificate of generalization would show that this gap is small.
However, the true data distribution is unknown, and thus generalization cannot be measured directly. Researchers have instead focused on generalization bounds: upper bounds on this gap computable from the training set and model~\cite{PACbook,dlbound,pac1,pac2,gen_bound_book,itbound}. While theoretically appealing, existing bounds are often numerically vacuous or too loose to track actual performance for modern, over-parameterized neural networks, rendering them unsuitable for certification.

These gaps have motivated empirically grounded \textit{generalization metrics}: quantities computable from the learned model and training data that reliably predict generalization.
Such metrics can serve as a principled proxy for generalization without relying on a specific holdout test set.

Recently, the \emph{sharpness} of the loss landscape has emerged as a compelling generalization metric, with empirical and theoretical evidence that models in wide ``flat'' basins generalize better than those in narrow ``sharp'' minima~\cite{keskar_large-batch_2017, jiang_fantastic_2019,petzka_relative_2021,haddouche_pac-bayesian_2025,chen_why_2023}.
One particularly effective approach measures the \emph{sensitivity} of training loss to small parameter perturbations~\cite{sam,ASAM,keskar_large-batch_2017,petzka_relative_2021,Dziugaite_Roy_2017}.
The seminal work of Foret et al.~\cite{sam} introduces sharpness-aware minimization (SAM), an optimizer that explicitly minimizes the model's sharpness.
SAM has been shown to significantly improve generalization~\cite{sam,ASAM,chen_why_2023}, robustness to noisy labels~\cite{baek_why_2024}, and adversarial robustness~\cite{wu2020adversarial} across various benchmarks and model architectures. SAM training is also associated with improved resistance to membership inference attacks~\cite{liu_generalization_2021}, backdoor defense~\cite{zhu2023enhancing}, and improvements in machine unlearning~\cite{tang2025sharpnessawaremachineunlearning,fan_towards_2025}. These results provide strong evidence that perturbation-based sharpness is a promising proxy for model generalization.

Despite this explanatory power, existing sharpness metrics have not been explored in the context of certification, likely due to two fundamental barriers.
First, these metrics have been validated only via correlation with test accuracy on models trained honestly on clean data. Given that test accuracy itself can fail to detect poor generalization,
it remains unclear whether or not sharpness can reliably predict generalization under (intentional or unintentional) training deviations.
Second, existing metrics incur prohibitive computational cost even with direct access to the model and training data, let alone the overhead required in a setting where an external stakeholder would like to verify (without access to the training data) a cryptographic proof computed by the model provider.

This motivates our goal: to \emph{design a generalization metric that is strongly predictive of generalization, remains informative under training deviations, and is efficient enough to support privacy-preserving verification.}  

\smallskip\noindent\textbf{Our results.} In this paper, we introduce \emph{\sharpness}, a generalization metric designed specifically for model certification. Unlike static sharpness, which measures the loss landscape geometry at a single point, \sharpness measures \textit{dynamic stability}: the extent to which
a model remains in a flat region under continuous, stochastic perturbation. We show that
\sharpness(1) outperforms existing metrics in terms of empirical correlation with generalization; (2) remains predictive of generalization even under training deviations, making it more sensitive in detecting poor generalization than existing measures; and (3) is significantly more efficient to compute.
\Sharpness thus provides an efficient and robust metric for evaluating generalization when the verifier has access to the training data. We also show how to extend this metric to the setting where
the verifier does not have access to the training dataset, via zero-knowledge proofs for \sharpness.

\subsection{Technical Overview}
We now outline the key ideas behind \sharpness. We first explain the shift from static to dynamic measurement, then discuss its theoretical grounding, and finally present empirical validation showing the metric is efficient and enables practical certification.

\smallskip\noindent\textbf{From static sharpness to dynamic stability.}
Our starting point is perturbation-based sharpness, which shows strong empirical correlation with generalization in honestly trained models~\cite{jiang_fantastic_2019}.
However, certification demands more than correlation in benign settings: a metric must detect low-quality models that are adversarially chosen to evade detection.

We identify a fundamental limitation of existing sharpness metrics:  they are \emph{static}, and evaluate a single, averaged perturbation around the final model to measure sharpness. This one-shot view can miss two failure modes that matter for certification: (i) \emph{incoherent per-example geometry}, where loss landscapes are sharp only on data subsets but appear flat when averaged over the full dataset, and (ii) \emph{unstable flat minima}, where a model sits in an apparently flat, narrow pocket adjacent to sharp regions that a single perturbation may not reliably detect.
Both failure modes are linked to models with poor generalization~\cite{he2019asymmetric,andriushchenko_towards_2022}, overfitting, and backdoors~\cite{yuan2024activation,garg2024memorization}.

Our core insight is that a reliable metric should \emph{verify sharpness dynamically}: a model in a wide, flat basin will remain well-behaved under a \emph{sequence} of small, stochastic perturbations, while models with incoherent or unstable geometry will eventually be pushed into sharp regions. We call this property \emph{dynamic stability}. Convergent evidence from sharpness-aware optimization supports this view: the effectiveness of SAM depends critically on \emph{continuous, stochastic} perturbations computed on small mini-batches, while perturbations using the full dataset are substantially less effective~\cite{andriushchenko_towards_2022,li2024friendly,baek_why_2024}.

\smallskip\noindent\textbf{Measuring dynamic stability via \sharpness.}
Building on this insight, we design \sharpness, a generalization metric that measures the dynamic stability of a model. Given a trained model, we apply a short sequence of SAM-style, stochastic perturbations and record the resulting sharpness after each step. A model in a genuinely flat basin absorbs these perturbations and maintains stable sharpness values; a model in a narrow, unstable region will be pushed toward sharp neighbors, producing detectable fluctuations. As we show in Sections~\ref{sec:flat:corr} and~\ref{sec:exp}, this shift from static to dynamic measurement yields significantly more reliable predictions of generalization, even under training deviations.

\smallskip\noindent\textbf{Theoretical grounding.}
We provide theoretical analysis linking \sharpness to a formal notion of stability.
Under standard assumptions, we show that if the initial model $\vecw_0$ is near a stable minimum, then the expected squared sharpness at step $t$ under continuous perturbation is
uniformly bounded:
\[
\mathbb{E}[s_t^2]
\le
\overline{\kappa}_{\rho} \cdot C \cdot \LLL_D(\vecw_0)
\qquad\text{for all }t,
\]
where $C>0$ and $\overline{\kappa}_{\rho}$ are constants and $\LLL_D(\vecw_0)$ is the training loss of the initial model. Conversely, if the expected squared sharpness grows exponentially, then $\vecw_0$ resides in an unstable region, which makes \sharpness a good indicator of apparently flat but dynamically unstable regions.

We further analyze the robustness of \sharpness under training deviations. We formalize a class of deviations that induce low gradient coherence, a phenomenon linked to poor generalization, overfitting, and backdoors~\cite{chang2025a,Chatterjee2020Coherent,sankararaman2020impact}. For this class of deviations, we show that \sharpness can detect instability signals that static measures may miss.

We complement our analysis with empirical evaluations showing that \sharpness (i) correlates with generalization more strongly than static sharpness metrics, and (ii) more consistently detects low-quality models under training deviations like overfitting and backdoors.

\smallskip\noindent\textbf{Empirical validation and practical certification.}
We conduct extensive experiments validating \sharpness for practical model certification. We consider two scenarios: \emph{auditing}, where a trusted verifier has access to training data and computes \sharpness directly; and \emph{certification}, where an external verifier, without accessing training data, checks a zero-knowledge proof that a potentially adversarial provider's claimed \sharpness value was computed correctly.
We validate three properties that make \sharpness suitable for both scenarios:
\begin{itemize}[leftmargin=*]
    \item \emph{strong correlation with generalization}: In Section~\ref{sec:flat:corr}, we show that \sharpness correlates significantly more strongly with generalization than even the best-performing static metrics.
    \item \emph{reliability under training deviations}:
    Existing generalization metrics have been evaluated only in the context of honestly trained models.
    However, for many certification settings, it is important that \sharpness remain correlated with generalization even under training deviations.
    In Section~\ref{sec:exp}, we evaluate \sharpness on models with both intentional deviations (e.g., injected backdoors) and unintentional ones (e.g., overfitting). \Sharpness reliably flags these models even when both test accuracy and static sharpness fail, confirming that dynamic stability measurement is more sensitive to subtle forms of poor generalization.
    \item  \emph{efficiency}: In the auditing setting, \sharpness is $4\times$ faster to compute than test accuracy. In certification, its zero-knowledge proof is up to $80{,}000\times$ faster than proving training correctness step-by-step (88.87 minutes versus an estimated 118{,}671 hours for VGG-11 over 150 epochs).
\end{itemize}

\subsection{Related Work}
\noindent\textbf{ML model auditing and certification.}
We distinguish between the terms \textit{model auditing} and \textit{model certification} based on the verifier's ability to access the training data. Specifically, we define \textit{model auditing} as a setting where the verifier has access to the training data. Many existing techniques for checking specific trustworthy properties of ML models fall under this category: \textit{fairness} auditing ensures models do not discriminate against particular groups~\cite{faired}; \textit{differential privacy auditing}~\cite{annamalai2025hitchhikersguideefficientendtoend} empirically tests the theoretical guarantee that a model's output hides the presence of individual training examples; and \textit{robustness} auditing~\cite{robust1,robust2,robust3,robust4} aims to provide provable guarantees against adversarial examples.

Conversely, we define \textit{model certification} as a setting where the verifier has no access to the training data. Many works in privacy-preserving ML adopt this setting, where cryptographic techniques like zero-knowledge proofs allow the verifier to \textit{certify} trustworthy properties while preserving the privacy of the training data. Existing works include certification of properties such as fairness~\cite{zkfair,fairzk1,fairzk2,zkfair3,fair2,fair3,fair}, differential-privacy guarantees~\cite{zkdp}, and vicinity to an optimum~\cite{OV}. Beyond property-specific guarantees, zero-knowledge proofs of training (zkPoT) allow the verifier to certify that a model was obtained by faithfully running a prescribed training algorithm~\cite{sun2024zkdl,kaizen,garg2023experimenting}.

\smallskip\noindent\textbf{Generalization and sharpness.}
While existing methods can certify the \textit{integrity} of the training process or specific properties, they do not directly certify the most fundamental measure of model quality: its ability to generalize to unseen data.
Theoretically, a generalization certificate is a tight, computable upper bound on the population risk~\cite{perez2021tighter}. Many works have focused on deriving such generalization bounds for modern ML models~\cite{pac1,pac2,PACbook,dziugaite_search_2021,haddouche_pac-bayesian_2025,Foong_Bruinsma_Burt_Turner_2022,Neu_Dziugaite_Haghifam_Roy_2021,dlbound,Nagarajan_Kolter_2019,Dziugaite_Roy_2017}. However, existing generalization bounds are often vacuous or too loose when applied to practical neural networks, and thus cannot certify generalization in practice.

A parallel line of research has studied empirical proxies for generalization. Among these, the geometry of the loss landscape, specifically its \textit{sharpness}, has long been linked to generalization, with flatter minima generalizing better~\cite{keskar_large-batch_2017,jiang_fantastic_2019}. This connection has motivated optimization techniques, such as SAM, that explicitly seek flatter minima to improve generalization~\cite{sam,ASAM}. While sharpness has been studied extensively as a generalization proxy, to the best of our knowledge, it has not been explored as a practical certificate of generalization.

\section{Preliminaries}
\label{sec:prelim}
\subsection{Machine Learning}

In this paper, we consider a generalized framework for supervised learning where a model is parameterized by $\vecw \in \mathbb{R}^p$. Let $\data = \{(\vecx_i, y_i)\}_{i=1}^N$ denote the training dataset with inputs $\vecx_i \in \mathbb{R}^d$ and labels $y_i \in \mathbb{R}$. A loss function $\ell(\vecw, (\vecx, y))$ measures the quality of the model's prediction on a data point. The training loss is $\LLL_\data(\vecw) = \frac{1}{N}\sum_{i=1}^N \ell(\vecw, (\vecx_i, y_i))$, which we abbreviate as $\LLL(\vecw):=\LLL_D(\vecw)$ and $\ell_i(\vecw) := \ell(\vecw, (\vecx_i,y_i))$ when the dataset is clear from context. A training algorithm $\mathsf{Train}_{\LLL_\data} : \mathcal{R} \to \mathbb{R}^p$ takes a random seed $r \in \mathcal{R}$ and outputs a model $\vecw$ that approximately minimizes $\LLL_\data(\vecw)$. We write $g(\vecw) := \nabla \LLL_\data(\vecw)$ for the full-dataset gradient and $H(\vecw)$ for the Hessian, dropping $\vecw$ when clear from context.

In practice, optimization uses mini-batches $\xi \subset \data$ of size $B = |\xi|$ sampled uniformly at random. We write $\LLL_\xi(\vecw)$ for the loss on $\xi$, and define the stochastic gradient as $g_\xi(\vecw) := \nabla \LLL_\xi(\vecw)$. The gradient noise is $\zeta(\vecw, \xi) := g_\xi(\vecw) - \nabla \LLL_\data(\vecw)$.

\subsection{Generalization and Sharpness}
Following the standard statistical learning framework~\cite{PACbook,itbound}, we assume training examples $z_i:=(\vecx_i,y_i)$ are drawn i.i.d.\ from a fixed but unknown distribution $\mathcal{P}$ over an instance space $\mathcal{Z} = \mathcal{X} \times \mathcal{Y}$. The \emph{population risk} of a model $\vecw$ is 
$R(\vecw) = \mathbb{E}_{z \sim \mathcal{P}}[\ell(\vecw, z)],$
which cannot be computed directly since $\mathcal{P}$ is unknown. Instead, we estimate it using the \emph{empirical risk}:
$$\hat{R}(\vecw) = \frac{1}{N} \sum_{i=1}^N \ell(\vecw, z_i),$$ which can be computed directly on the training set.

The core challenge in learning is that a model with low empirical risk might not necessarily have a low population risk. The \emph{generalization gap}, $\operatorname{Gap}(\vecw) = R(\vecw) - \hat{R}(\vecw)$, quantifies how well training performance predicts test performance. A model generalizes well when this gap is small. In Section~\ref{sec:cert}, we discuss how to certify generalization given that the true gap is unobservable.

\smallskip\noindent\textbf{Generalization metrics and sharpness.}
A generalization metric, or \emph{complexity measure}~\cite{jiang_fantastic_2019,haddouche_pac-bayesian_2025,dziugaite_search_2021,viallard2024leveraging}, is a quantity that monotonically relates to generalization and is computable without access to a test set.
Among the many proposed generalization metrics, one of the most prominent and empirically successful is the \emph{sharpness} of the model~\cite{keskar_large-batch_2017,sam,ASAM}. In this paper, we focus on perturbation-based (worst-case) sharpness:
\begin{definition}[Worst-Case Sharpness]
\label{def:sharpness}
Given a loss function $\LLL_\data$ parameterized by the training data $\data$, a radius $\rho > 0$, and norm $\|\cdot\|_p$, the \textbf{worst-case sharpness} of parameters $\vecw$ is defined as:
$$\mathsf{S}_{\rho,p}(\vecw, \data) := \max_{\|\bm{\epsilon}\|_p \le \rho} \left( \LLL_\data(\vecw + \bm{\epsilon}) - \LLL_\data(\vecw) \right).$$
\end{definition}
This definition, however, is sensitive to parameter re-scaling~\cite{sharpcan,ASAM}, which can be exploited to make a sharp minimum appear artificially flat. This led to a magnitude-aware worst-case version invariant to such re-scaling:
\begin{definition}[Magnitude-Aware Worst-Case Sharpness~\cite{ASAM}]
\label{def:adapt_sharpness}
Given a loss function $\LLL_\data$ parameterized by the training data $\data$, a radius $\rho > 0$, and norm $\|\cdot\|_p$, denote $\mathbf{T}_{\vecw} := \textrm{diag}(|\vecw|)$. The \textbf{magnitude-aware worst-case sharpness} of parameters $\vecw$ is:
$$\mathsf{S}^{\mathsf{mag}}_{\rho,p}(\vecw, \data) := \max_{\|{\bf{ T}}^{-1}_{\vecw}\bm{\epsilon}\|_p \le \rho} \left( \LLL_\data(\vecw + \bm{\epsilon}) - \LLL_\data(\vecw) \right).$$

\end{definition}

\subsection{Zero-Knowledge Proofs}
A zero-knowledge proof (ZKP) of circuit satisfiability allows a prover holding a witness $w$ to prove to a verifier that $\CCC(w) = 1$ for some public circuit $\CCC$ without revealing any information about $w$.
Many existing ZKP protocols can be used for proving \sharpness, including zkSNARKs~\cite{EC:Groth16,EC:CHMMVW20,EPRINT:GabWilCio19,libsnark} and proofs based on vector oblivious linear evaluation (VOLE)~\cite{CCS:WYYXW22,C:BMRS21,CCS:DILO22,C:BBMS22,CCS:BBMRS21,CCS:YSWW21}.

\smallskip\noindent\textbf{Zero-knowledge proofs of training.}
Zero-knowledge proofs of training (zkPoTs)~\cite{sun2024zkdl,kaizen,garg2023experimenting,OV} allow a prover to prove that a model is trained correctly on a committed dataset without revealing any additional information about the model or the dataset. ZkPoTs can be used for distributed model training, privacy auditing, and proving ownership~\cite{10966041,kaizen,zkdp}.

\section{Certification of Model Generalization}
\label{sec:cert}

\subsection{Certification Threat Model}
We now motivate and formalize what it means to certify generalization. The goal of certification is to generate a (short) certificate that enables efficient verification of the certified property. Thus, there are two main steps in the certification process: computation of the certificate (proof) and verification of the certificate. The roles, the threat models, and the available inputs for the parties that execute these steps may differ in different applications. We distinguish between two main settings: \emph{model auditing} and \emph{model certification}.

In model auditing, the verifier is trusted by the model provider to have access to the training data. This includes settings where the model provider uses generalization certificates as a quality metric for candidate model checkpoints and to detect unintentional training issues such as overfitting or memorization. Since the model provider is the natural consumer of the generalization certificates, the threat model for the soundness of the certificates assumes that training has been done honestly.

In external auditing applications, the verifier is a third party, but the trust framework for the auditor provides access to training data as well. In this case, the model provider may or may not be trusted to execute the training honestly.

On the other hand, in model certification, the verifier and the model provider---acting now as a cryptographic \emph{prover}---are mutually distrustful. Thus, the verifier is not given access to the training data, and the prover is not trusted to follow any prescribed steps for the training. In this setting, the prover certifies a claim about the model relative to a committed training dataset $D$. We assume that $D$ is authenticated, so the prover cannot arbitrarily craft, replace, or modify the dataset used for certification. This assumption is consistent with existing proof-of-training literature~\cite{sun2024zkdl,kaizen,garg2023experimenting,OV} and is natural in practical certification settings, where the authenticity of $D$ can be enforced by external authentication, trusted data sources, or additional proofs. Attacks in which the provider can freely choose or forge the committed dataset are therefore outside our threat model.

Bearing in mind the above certification applications, we proceed to describe our approach to certifying generalization and our security goals. The first required step towards computing a generalization certificate is the ability to measure this property. Next, we discuss the challenges to quantifying generalization.

\subsection{Quantifying Generalization}
\label{sec:cert:quant}

As discussed in Section~\ref{sec:intro}, for ``certifying generalization'' to have practical meaning, it cannot mean proving the exact generalization gap or a loose theoretical bound. Instead, we seek to certify an empirically motivated \textit{generalization metric} that is computable and has direct, actionable implications in practice. We now formalize what it means to certify generalization via such metrics.

\noindent\textbf{Metric-based certification.}
In our solution, we adopt a metric-based approach in which the generalization certificate for the model is based on a generalization metric computed on the final model.
Specifically, a \emph{generalization metric} is an efficiently computable (randomized) function $\gamma_\LLL$ mapping $(\vecw, D, r)$ to a value $\gamma_\LLL(\vecw, D, r) \in \mathbb{R}_{\ge 0}$,
where $D$ and $\vecw$ denote the training set and trained model held by the model provider (prover), $\LLL$ is a public loss function, and $r$ is a public random seed. Smaller values of $\gamma_\LLL(\vecw, D, r)$ are interpreted as stronger evidence of generalization.

Therefore, by \textit{certificate of generalization} we mean a certificate (proof) of a predicate on this metric:  $$\mathsf{Pred}_\tau(\vecw, D,r) \equiv [\gamma_\LLL(\vecw, D,r) \le \tau],$$ where the threshold $\tau$ and seed $r$ are public. When instantiated with a ZKP with knowledge soundness, this means that any computationally bounded prover who convinces the verifier must know a model $\vecw$ and dataset $D$ satisfying $\mathsf{Pred}_\tau(\vecw,D,r)$. Thus, the prover cannot falsely claim a lower metric value for the model $\vecw$ and its committed dataset $D$.
Importantly, the role of certification here is to verify the value of $\gamma_\LLL(\vecw, D,r)$ and not to prove the link between $\gamma_\LLL$ and the true generalization gap.

\subsection{Desiderata of Generalization Metrics in Certification}
\label{sec:flat:desider}

Having defined metric-based certification, we now ask what properties a generalization metric must satisfy to serve as the basis of a certificate.

\smallskip\noindent\textbf{Existing evaluation method.} Because generalization does not have an observable ``ground truth'', evaluating how well a metric predicts generalization must be done through some proxy. Most empirical work evaluates a candidate metric by training a large grid of models, computing the metric on each model, and measuring its correlation with an empirical proxy for generalization—most commonly the \emph{empirical generalization gap}  measured on a holdout test set (e.g.,~\cite{jiang_fantastic_2019}). Metrics with stronger and more consistent correlation are viewed as more predictive of generalization.

\smallskip\noindent\textbf{Evaluating metrics for certification.}
However, existing metrics are primarily proposed as \emph{explanatory} tools for understanding generalization in settings where models are trained honestly and evaluation data is benign. In certification, the training procedure may deviate from the nominal one, either accidentally (e.g., label noise, shortcut learning, or distributional artifacts) or intentionally (e.g., data poisoning or backdoors). Thus, a metric used for certification requires stronger evidence than correlation on honestly trained models alone.

Specifically, we propose the following desiderata that a metric should satisfy for the resulting certificate to be reliable.
\begin{enumerate}[leftmargin=*]

\item \textbf{Correlated with generalization.}\label{item:desiderata-predictive}  As an empirical generalization metric, it must correlate strongly with the empirical generalization gap across a diverse set of models. This is the established baseline for any generalization metric.
\item \textbf{Predictive despite training deviations.} \label{item:desiderata-robust}
The metric should continue to track generalization even when the training process deviates from the nominal procedure (intentionally or accidentally).
Concretely, this means it should be able to identify training deviations that preserve low training loss but degrade generalization, such as overfitting, memorization, spurious shortcuts, or adversarial behaviors that are only triggered under specific conditions.

\item \textbf{Efficiently computable.} \label{item:desiderata-efficient} As a practical certification tool, it must be computationally efficient, requiring at most a handful of mini-batch operations and avoiding full-dataset or Hessian-level costs.

\end{enumerate}
\smallskip\noindent\textbf{On robustness to training deviations.}
While a generalization metric used for certification should ideally provide guarantees against any training deviations, we note that formalizing such guarantees faces fundamental challenges. First, the relationship between training deviations and generalization is not well defined. Generalization, in the strict learning-theoretic sense, is defined with respect to a fixed data distribution $\mathcal{P}$. For example, a backdoor model might behave maliciously only on attacker-chosen trigger inputs that lie outside the support of $\mathcal{P}$; such behavior may not increase the generalization gap as classically defined. Second, ruling out \emph{all} strategies that fool a metric would require quantifying over unconstrained adversarial objectives. Some objectives may be statistically independent of the training set and therefore cannot be detected from the model and training data alone; without additional assumptions, universal detection can be impossible in a strong sense.

Given these challenges, we instead take a principled approach to validate our metric. In Section~\ref{sec:flat:deviation}, we provide analysis of a class of deviations characterized by \emph{low gradient coherence}---a property empirically linked to poor generalization and known deviations---and show that such deviations can remain invisible to existing metrics yet can be detected by \sharpness. In Section~\ref{sec:exp}, we complement our analysis with extensive experimental evidence that \sharpness reliably flags models with known deviations even when test accuracy and static sharpness do not.

\section{Defining and Validating \SHARPNESS}
\label{sec:sharpness}
Section~\ref{sec:cert} formalized certifying generalization and identified three desiderata for generalization metrics: strong correlation, reliability under training deviations, and computational efficiency. In this section, we develop such a metric based on the dynamic stability of the loss landscape. We begin by reviewing the stochastic behavior of the SAM optimizer and deriving our metric from its dynamics (Sections~\ref{sec:flat:sam_dynamics} and~\ref{sec:flat:dynamic}), then provide theoretical grounding (Sections~\ref{sec:flat:math} and~\ref{sec:flat:deviation}), and present empirical validation (Section~\ref{sec:flat:corr}).

\subsection{From SAM to Dynamic Stability}
\label{sec:flat:sam_dynamics}
The starting point of our generalization metric is Sharpness-Aware Minimization (SAM)~\cite{sam}.
SAM seeks flat minima by minimizing both the loss and its \textit{worst-case sharpness} (Def.~\ref{def:sharpness}), yielding the min-max objective:
\begin{equation}
\min_{\vecw} \max_{\|\bm{\epsilon}\|_p \le \rho} \LLL_D(\vecw + \bm{\epsilon}).
\label{eq:sam_objective}
\end{equation}
Solving the inner maximization exactly is intractable, so SAM approximates it via a first-order Taylor expansion: $\LLL_D(\vecw + \bm{\epsilon}) \approx \LLL_D(\vecw) + \bm{\epsilon}^\top \nabla \LLL_D(\vecw)$. The inner problem then reduces to $\max_{\|\bm{\epsilon}\|_p \le \rho} \bm{\epsilon}^\top \nabla \LLL_D(\vecw)$, which is a dual norm problem with the closed-form solution (for \(1<p<\infty\) and \(\nabla\LLL_D(\vecw)\neq 0\)):
\begin{equation}
\hat{\bm\epsilon}(\vecw)=\rho\,\frac{\text{sign}(\nabla\LLL_D(\vecw))\,|\nabla\LLL_D(\vecw)|^{q-1}}{\|\nabla\LLL_D(\vecw)\|_q^{\,q-1}},
\label{eq:sam_perturbation}
\end{equation}
where $q$ satisfies $\frac{1}{p} + \frac{1}{q} = 1$. Substituting this back yields:
\begin{definition}[Approximated Sharpness~\cite{sam}]
\label{def:approx_sharpness}
Let $\LLL_D$ be the loss, and let $\nabla \LLL_D(\vecw)$ be its gradient. Given a radius $\rho > 0$ and a pair of dual $p, q$ norms (where $\frac{1}{p} + \frac{1}{q} = 1$), the approximated sharpness or \textbf{SAM sharpness} of model $\vecw$ is:
\[
\mathsf{S}_{\rho,p}^{\mathsf{sam}}(\vecw, D) := \nabla \LLL_D(\vecw)^\top \hat{\bm{\epsilon}}(\vecw) = \rho \|\nabla \LLL_D(\vecw)\|_q.
\]
\end{definition}
ASAM~\cite{ASAM} proposes using the magnitude-aware worst-case sharpness (Def.~\ref{def:adapt_sharpness}) as a more robust objective; this yields:
\begin{definition}[Approximated Magnitude-Aware Sharpness~\cite{ASAM}]
\label{def:approx_adapt_sharpness}
Let $\mathbf{T}_{\vecw}$ denote $\textrm{diag}(|\vecw|)$. Given a radius $\rho > 0$ and a pair of dual $p, q$ norms (where $\frac{1}{p} + \frac{1}{q} = 1$), the {magnitude-aware approximated sharpness} or \textbf{ASAM sharpness} of $\vecw$ is:
\[
\mathsf{S}_{\rho,p}^{\mathsf{asam}}(\vecw,D):=\max_{\|{\bf{ T}}^{-1}_{\vecw}\bm{\epsilon}\|_p \le \rho} \left( \nabla \LLL_D(\vecw)^\top \bm{\epsilon} \right) =
\rho \|{\bf{ T}}_{\vecw}\nabla \LLL_D(\vecw)\|_q.
\]
\end{definition}

SAM effectively makes two approximations: (i) replacing the worst-case perturbation with the gradient direction, and (ii) replacing full-batch loss with mini-batch loss.
Recent work reveals that these approximations are not merely computational shortcuts but are \emph{essential} for improving generalization: using full-batch sharpness significantly degrades performance~\cite{sam,baek_why_2024}. This suggests that SAM succeeds not merely because it finds points of low sharpness, but because it finds minima that remain stable under \emph{continuous}, \emph{stochastic} perturbation---two properties that existing static measures lack:
\begin{itemize}[leftmargin=*]
\item \textbf{Continuous perturbation.} Static sharpness measures evaluate the loss increase from a single, bounded perturbation, which can be misleading when a minimum is locally flat but adjacent to sharp regions. A sequence of perturbations can incrementally push parameters out of such narrow pockets, revealing the adjacent sharpness through detectable increases in sharpness. This is confirmed by~\cite{latesam}: applying SAM to an SGD-trained model pushes it from the SGD minimum to a much flatter SAM minimum.

\item \textbf{Stochastic perturbation.}
Static measures compute sharpness on the full training set and can potentially mask high sharpness on specific subsets. A model that appears flat on the full dataset but exhibits high sharpness on particular data points would likely lead to poor generalization.
\end{itemize}

These findings motivate our central proposal: we define a generalization metric that measures sharpness not as a static property at a single point, but as a measure of \emph{dynamic stability}---how consistently flat a model remains under continuous, sharpness-aware perturbation. We detail our approach next.

\subsection{Our Approach: Dynamic Stability}
\label{sec:flat:dynamic}

\RestyleAlgo{boxed}
\begin{algorithm}[t]
\DontPrintSemicolon
\KwIn{Initial model parameters $\vecw_0$, dataset $D$, public random seed $r$, configuration
$\mathcal{C}=(T,B,\Phi,\mathsf{S},\mathsf{fluc},\LLL,\Theta)$.}
\KwOut{\SHARPNESS measure $\mathcal{F}$.}

$\vecw \leftarrow \vecw_0$\;
$\mathcal{H} \leftarrow [\,]$\;

\For{$t \leftarrow 0$ \KwTo $T-1$}{
    Sample a mini-batch $\xi_t \subset D$ of size $B$ via $(r,t)$\;
    $\LLL_{\xi_t} \leftarrow \LLL_{\xi_t}(\vecw)$\;
    $s_t \leftarrow \mathsf{S}(\vecw,\xi_t,\Theta)$\;
    Append $(\vecw,\xi_t,s_t,\LLL_{\xi_t})$ to $\mathcal{H}$\;
    $\vecw \leftarrow \Phi(\vecw,\xi_t,\Theta)$\;
}

$\mathcal{F} \leftarrow \mathsf{fluc}(\mathcal{H})$\;
\Return $\mathcal{F}$\;
\caption{Computing \sharpness.}
\label{fig:alg:flat}
\end{algorithm}

We define our generalization metric, \sharpness, as a measure of a model's geometric stability under a series of sharpness-aware perturbations.

\begin{definition}[\SHARPNESS]\label{def:dflat}
Given a model $\vecw_0$, a dataset $D$, a public random seed $r$, batch size $B$, number of steps $T$, a loss function $\LLL(\cdot)$, a sharpness function $\mathsf{S}(\cdot)$, a SAM update operator $\Phi_{\Theta}(\cdot)$ with hyperparameters $\Theta$, and a fluctuation statistic $\mathsf{fluc}(\cdot)$, for $t\in\{0, \dots,T-1\}$, we first generate a probe history $\mathcal{H} = \{h_0, \dots, h_{T-1}\}$ by iteratively computing:
$h_t := (\vecw_t, \xi_t, s_t, \LLL_{\xi_t})$
where $\xi_t$ is a mini-batch of size $B$ drawn uniformly from $D$ using seed $r$, $s_t = \mathsf{S}(\vecw_t, \xi_t,\Theta)$, $\LLL_{\xi_t} = \LLL_{\xi_t}(\vecw_t)$, and $\vecw_{t+1} := \Phi_{\Theta}(\vecw_t, \xi_t)$.
The \textbf{\sharpness} of model $\vecw_0$ parameterized by  $\mathcal{C} = (T, B, \Phi, \mathsf{S}, \mathsf{fluc}, \LLL, \Theta)$ is defined as:
$$\mathsf{S}^{\mathsf{d}}_{\mathcal{C}}(\vecw_0, D, r) := \mathsf{fluc}(\mathcal{H}).$$
\end{definition}

Algorithm~\ref{fig:alg:flat} shows the procedure for computing \sharpness. Given a trained model $\vecw_0$, we apply $T$ steps of SAM-style updates using freshly sampled mini-batches. At each step $t$, we record a per-step sharpness $s_t = \mathsf{S}_{\rho,p}^{\mathsf{sam}}(\vecw_t,\xi_t)$ (e.g., SAM sharpness or ASAM sharpness) with mini-batch $\xi_t$. This probing process yields a time series of SAM sharpness values $\{s_0, s_1, \dots, s_{T-1}\}$. We define the \sharpness metric using a statistic that quantifies the stability of this sequence.

\smallskip\noindent\textbf{Choice of fluctuation statistic.}
Here $\mathsf{fluc}$ is a function that outputs a final non-negative number quantifying the amount of fluctuation in $\mathcal{H}$. In practice, we found that tracking the raw sharpness values $\{s_t\}$ alone can be sensitive to the overall loss scale across models and datasets. We therefore use an \emph{empirical} normalization strategy via $r_t:=s^2_t/\LLL_{\xi_t}$ and set $\mathsf{fluc}(\mathcal{H}):=\mathrm{Std}(\{\log(r_t+\varepsilon)\}_{t=0}^{T-1})$ for a small $\varepsilon>0$. Empirically, this ratio-based statistic is slightly more correlated with generalization than using $\{s_t\}$ alone, so we adopt it by default.

The key semantic difference from static sharpness is that \sharpness tests how a model \emph{responds} to SAM dynamics rather than measuring geometry at a single point.
If $\vecw_0$ resides in a truly flat basin—the type SAM converges to—then applying SAM perturbations will produce stable sharpness values. Conversely, if $\vecw_0$ resides in a sharp or unstable region, SAM will push the model toward neighboring sharp areas, producing an unstable series. Thus, a low \sharpness value suggests that the model remains geometrically
stable under the SAM probing dynamics, while a high value indicates instability along this trajectory. We validate empirically that this dynamic-stability signal correlates with generalization and provide a theoretical analysis of this connection in the next section.

\subsection{A Theoretical Foundation for \SHARPNESS}
\label{sec:flat:math}

In this section, we provide a theoretical analysis for our \sharpness metric.
We proceed by defining a notion of stability based on recent results on SAM dynamics~\cite{latesam}, then deriving bounds that connect our sharpness measurements to this stability.

\smallskip\noindent\textbf{Formalizing stability.}
Recall that the goal of \sharpness is to verify whether a trained model exhibits stability properties that are predictive of good generalization.
We formalize this stability property using the concept of {SAM-stability} from recent optimization literature, as recent work suggests that SAM's dynamics capture a generalization-relevant bias stronger than static sharpness measures: among multiple minima of equal static sharpness, SAM exhibits an implicit bias favoring those that generalize better~\cite{latesam,andriushchenko2023sharpness,wen_sharpness_2023,springer2024sharpnessaware}.

Concretely, we consider the model we are certifying to have a small training loss and be near some global minimum $\vecw^*$. In this regime, the model can be approximated by the local linearization of its prediction function $f$:
$$f_{\text{lin}}(\vecx; \vecw) = f(\vecx; \vecw^*) + \langle \nabla_{\vecw} f(\vecx; \vecw^*), \vecw - \vecw^* \rangle.$$
The corresponding linearized loss is:
$$ \LLL_{\text{lin}}(\vecw)
  =
  \frac{1}{N} \sum_{i=1}^N
  \ell\bigl(f_{\text{lin}}(\vecx_i; \vecw), y_i\bigr).$$
This linearization allows us to analyze the local geometry of the loss landscape near convergence. Throughout our analysis, we work in this linearized model and write $\LLL(\vecw) := \LLL_{\text{lin}}(\vecw)$. We define stability in terms of SAM's behavior on $\LLL$:
\begin{definition}[SAM-Stability]\label{def:sam_stability}
A minimum $\vecw^*$ is (linearly) \emph{SAM-stable} if there exists $C > 0$ such that for any starting point $\vecw_0$ near $\vecw^*$ where the linearization is valid, applying the SAM optimizer to the loss $\LLL$ yields a trajectory $\{\vecw_t\}$ satisfying $\mathbb{E}[\LLL(\vecw_t)] \le C \cdot \mathbb{E}[\LLL(\vecw_0)]$ for all $t \ge 0$. A minimum that is not SAM-stable is called \emph{SAM-unstable}. 
\end{definition}

Intuitively, a SAM-stable minimum is one where SAM's perturbation-based dynamics remain bounded---the optimizer cannot escape, and the loss stays controlled.
Our central claim is that \sharpness provides an efficient, empirical test for this stability. We show that bounded SAM loss implies bounded expected sharpness, while exponential sharpness growth certifies instability.
Thus, a stable basin can yield controlled sharpness along the probing trajectory, and exponential growth rules out a stable basin.

\smallskip\noindent\textbf{Linking \sharpness to stability.}
We establish this claim in two steps. First, we derive a ``sandwich'' bound showing that the sharpness $s_t$ is tightly coupled to the loss $\LLL(\vecw_t)$ at each step $t$. Second, we use this bound to characterize how sharpness evolves at stable versus unstable minima.

We first adopt the following assumptions from~\cite{latesam} and establish that per-step SAM sharpness measurements are coupled to the loss:
\begin{assumption}[Smoothness and Polyak-Łojasiewicz Condition \cite{latesam}]
\label{as:PL}
There exist $\mu>0, L_s>0$ such that $\|H(\vecw)\|_2\leq L_s$ and $\|g(\vecw)\|_2^2\geq 2\mu\,\LLL(\vecw)$ for all $\vecw$.
\end{assumption}

\begin{assumption}[Bounded Gradient Noise~\cite{latesam}]
\label{as:noise}
The gradient noise variance $\zeta(\vecw, \xi) = g_\xi(\vecw) - g(\vecw)$ is bounded by the full-dataset loss. That is there exists a constant $\sigma > 0$ such that for all $\vecw$, $\mathbb{E}[\|\zeta(\vecw, \xi)\|_2^2] \le \sigma^2 \LLL(\vecw).$
\end{assumption}

Assumption~\ref{as:PL} is standard and frequently used in non-convex optimization. Assumption~\ref{as:noise} states that mini-batch noise is controlled by the loss value, a phenomenon that has been empirically observed~\cite{mori2022power,feng2021inverse,wojtowytsch2024stochastic}.

\begin{lemma}[Linear Sandwich Bound]
\label{thm:linear-sandwich}
Let $\{\vecw_t\}_{t\ge 0}$ be a sequence of models where Assumptions~\ref{as:PL} and~\ref{as:noise} hold. Let $s_t$ denote the SAM sharpness (Def.~\ref{def:approx_sharpness}) of model $\vecw_t$ with $p=q=2$ and radius $\rho$, evaluated over a mini-batch of size $B$ sampled uniformly without replacement from $D$. Then for all $t$:
$$\underline{\kappa}_\rho \cdot \LLL(\vecw_t) \le \mathbb{E}\bigl[s_t^2 \big| \vecw_t\bigr] \le \overline{\kappa}_\rho \cdot \LLL(\vecw_t),$$
where $\underline{\kappa}_\rho = 2\rho^2\mu$ and $\overline{\kappa}_\rho = \rho^2\left(2L_s + \frac{\sigma^2}{B}\right)$.
\end{lemma}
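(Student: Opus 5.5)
We write $s_t = \rho\|g_{\xi_t}(\vecw_t)\|_2$, which is the SAM sharpness of Def.~\ref{def:approx_sharpness} with $p=q=2$ evaluated on the mini-batch. Then $s_t^2 = \rho^2\|g(\vecw_t)+\zeta(\vecw_t,\xi_t)\|_2^2$. Conditioning on $\vecw_t$, we expand
\[
\mathbb{E}\bigl[s_t^2\,\big|\,\vecw_t\bigr] = \rho^2\Bigl(\|g(\vecw_t)\|_2^2 + 2\,g(\vecw_t)^\top\mathbb{E}[\zeta\mid\vecw_t] + \mathbb{E}\bigl[\|\zeta\|_2^2\,\big|\,\vecw_t\bigr]\Bigr).
\]
A uniformly sampled mini-batch, with or without replacement, gives an unbiased gradient, so $\mathbb{E}[\zeta\mid\vecw_t]=0$ and the cross term vanishes. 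This leaves the bias-variance identity $\mathbb{E}[s_t^2\mid\vecw_t]=\rho^2(\|g\|_2^2+\mathbb{E}\|\zeta\|_2^2)$. The key point is that $\vecw_t$ depends only on $\xi_0,\dots,\xi_{t-1}$, so $\xi_t$ is independent of $\vecw_t$. This is what lets us apply the assumptions pointwise at $\vecw=\vecw_t$.

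\textbf{Lower bound.} We drop the nonnegative variance term and apply the PL part of Assumption~\ref{as:PL}, $\|g\|_2^2\ge 2\mu\LLL$. This gives $\mathbb{E}[s_t^2\mid\vecw_t]\ge 2\rho^2\mu\,\LLL(\vecw_t)=\underline{\kappa}_\rho\LLL(\vecw_t)$.

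\textbf{Upper bound, gradient term.} We need $\|g(\vecw)\|_2^2\le 2L_s\LLL(\vecw)$. This is the standard consequence of $L_s$-smoothness for a function bounded below by $0$. We take the descent-lemma step $\vecw'=\vecw-g/L_s$, giving $0\le\LLL(\vecw')\le \LLL(\vecw)-\|g\|_2^2/(2L_s)$. This uses $\|H\|_2\le L_s$ globally and $\LLL\ge 0$; the latter holds for the nonnegative losses considered.

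\textbf{Upper bound, variance term.} Assumption~\ref{as:noise} is stated for the noise of a batch $\xi$. To obtain the $1/B$ factor, we interpret $\sigma^2$ as the per-example noise level, i.e.\ $\mathbb{E}_i\|\nabla\ell_i-g\|_2^2\le\sigma^2\LLL$. We then use the fact that the average of $B$ samples drawn without replacement has variance $\frac{N-B}{N-1}\cdot\frac{1}{B}$ times the single-sample variance. Since $\frac{N-B}{N-1}\le 1$, this is at most $\sigma^2\LLL/B$. Combining the two pieces yields $\overline{\kappa}_\rho=\rho^2(2L_s+\sigma^2/B)$.

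\textbf{Main obstacle.} The delicate step is the variance term. One must reconcile the assumption as literally stated, which is batch-level and would give $\sigma^2$ without the $1/B$, with the claimed constant. This requires the finite-population sampling-without-replacement variance formula and the single-example reading of the noise assumption. A secondary subtlety is the smoothness-implies-gradient-bound step, which needs global smoothness and $\LLL\ge0$. Both assumptions are available in the linearized setting.
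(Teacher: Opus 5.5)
Your proposal is correct and matches the paper's proof essentially step for step. Both arguments use the same four ingredients:
\begin{itemize}
\item the unbiasedness-based decomposition $\mathbb{E}[s_t^2\mid\vecw_t]=\rho^2(\|g\|_2^2+\mathbb{E}\|\zeta_t\|_2^2)$;
\item the PL lower bound;
\item the smoothness bound $\|g\|_2^2\le 2L_s\LLL$;
\item the without-replacement factor $\frac{N-B}{B(N-1)}\le\frac1B$ applied to a per-example reading of Assumption~\ref{as:noise}, which is exactly the interpretation the paper's appendix adopts.
\end{itemize}
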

\begin{proof}
See Appendix~\ref{app:proof:sandwich}.
\end{proof}

This lemma shows that the expected squared sharpness value is bounded between constant multiples of the loss, with constants depending on gradient noise and local curvature. We now present our main theorem showing the link between \sharpness and SAM-stability.
\begin{theorem}[Sharpness Dynamics and SAM-Stability]
\label{thm:sharpness-dynamics}
Let $\{\vecw_t\}_{t\ge 0}$ be the trajectory generated by applying SAM to
$\LLL$, starting from $\vecw_0$ near a global minimum $\vecw^*$. Suppose
Assumptions~\ref{as:PL} and~\ref{as:noise} hold, and let $s_t$ denote the
mini-batch SAM sharpness at step $t$. Then the following statements hold:

\noindent\textbf{(a) Stability implies bounded sharpness.}
If $\vecw^*$ is SAM-stable in the sense of Def.~\ref{def:sam_stability}
with constant $C>0$, then
\[
\mathbb{E}[s_t^2]
\le
\overline{\kappa}_\rho C\LLL(\vecw_0)
\qquad\text{for all }t\ge 0.
\]

\smallskip\noindent\textbf{(b) Exponential sharpness growth certifies instability.}
Assume $\LLL(\vecw_0)>0$. If there exist $K>0$, $\alpha>1$, and $t_0\ge 0$ such that for all $t\ge t_0$, $\mathbb{E}[s_t^2]\ge K \alpha^t\LLL(\vecw_0),$ then $\vecw^*$ is SAM-unstable.
\end{theorem}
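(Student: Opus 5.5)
The plan is to derive both parts from Lemma~\ref{thm:linear-sandwich} via the tower property of conditional expectation, combined with Def.~\ref{def:sam_stability}. The sandwich bound holds conditionally on $\vecw_t$ for every $t$. It applies along the SAM trajectory because the mini-batch $\xi_t$ used to evaluate $s_t$ is drawn afresh at step $t$. Since $\vecw_t$ is a function of $\vecw_0$ and $\xi_0,\dots,\xi_{t-1}$ only, conditioning on $\vecw_t$ leaves $\xi_t$ uniformly distributed. Taking total expectations in the lemma then gives, for all $t$,
\[
\underline{\kappa}_\rho\,\mathbb{E}[\LLL(\vecw_t)] \le \mathbb{E}[s_t^2] \le \overline{\kappa}_\rho\,\mathbb{E}[\LLL(\vecw_t)].
\]
I will check that Assumptions~\ref{as:PL} and~\ref{as:noise} are stated for all $\vecw$, so they hold along the random trajectory.

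For part (a), I apply the upper half of this inequality and then the SAM-stability bound $\mathbb{E}[\LLL(\vecw_t)] \le C\,\mathbb{E}[\LLL(\vecw_0)]$. Since $\vecw_0$ is deterministic (the given model), $\mathbb{E}[\LLL(\vecw_0)] = \LLL(\vecw_0)$, which yields $\mathbb{E}[s_t^2]\le \overline{\kappa}_\rho C\LLL(\vecw_0)$.

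For part (b), I argue by contradiction. Suppose $\vecw^*$ were SAM-stable with some constant $C$. Then part (a) gives $\mathbb{E}[s_t^2]\le \overline{\kappa}_\rho C\LLL(\vecw_0)$ for every $t$. Combining this with the hypothesis, for all $t\ge t_0$,
\[
K\alpha^t\LLL(\vecw_0)\le \overline{\kappa}_\rho C\LLL(\vecw_0).
\]
Dividing by $\LLL(\vecw_0)>0$ gives $\alpha^t\le \overline{\kappa}_\rho C/K$, which fails for large $t$ because $\alpha>1$. Hence no such $C$ exists, and $\vecw^*$ is SAM-unstable.

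The main subtlety I expect is the quantifier in Def.~\ref{def:sam_stability}. Stability is asserted for every starting point near $\vecw^*$ at which the linearization is valid, so I must note that the given $\vecw_0$ is such a point; this is the theorem's premise. I must also confirm that the lemma's constants do not depend on $t$, which holds because $\mu$, $L_s$, $\sigma$, $B$ and $\rho$ are global. Everything else is routine.
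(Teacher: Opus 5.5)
Your proposal is correct and follows the paper's proof. You take total expectations in Lemma~\ref{thm:linear-sandwich} via the tower rule, and for (a) you substitute the SAM-stability bound $\mathbb{E}[\LLL(\vecw_t)]\le C\LLL(\vecw_0)$ into the upper side, exactly as the paper does. For (b), the paper uses that same upper side directly to get $\mathbb{E}[\LLL(\vecw_t)]\ge (K/\overline{\kappa}_\rho)\,\alpha^t\LLL(\vecw_0)$, which violates Def.~\ref{def:sam_stability}; your contradiction through part (a) is the same inequality chain read in the other direction.
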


\begin{proof}
    See Appendix~\ref{app:proof:dynamic}.
\end{proof}

\smallskip\noindent\textbf{Interpretation.}
Theorem~\ref{thm:sharpness-dynamics} provides theoretical support for our claim that \sharpness is directly linked to SAM optimizer dynamics.
If the iterate lies in a SAM-stable basin, then the expected sharpness is uniformly bounded (Part (a)). Conversely, exponential growth of the expected sharpness is a sufficient certificate that the iterate is not SAM-stable (Part (b)). While the theorem is one-way, empirically we find that poor generalization and training deviations consistently lead to large \sharpness scores (Section~\ref{sec:exp}).

\smallskip\noindent\textbf{Remark (Extension to ASAM).}
For clarity, our analysis is stated for SAM sharpness (Def.~\ref{def:approx_sharpness}).
The same argument extends to ASAM (Def.~\ref{def:approx_adapt_sharpness}), where the perturbation is constrained by
$\|({\bf T}_{\vecw})^{-1}\bm{\epsilon}\|_p\le \rho$ for a scaling operator $\mathbf{T}_{\vecw} := \textrm{diag}(|\vecw|)$.
In particular, ASAM uses ${\bf T}'_{\vecw}={\bf T}_{\vecw}+\epsilon_0 I$ with $\epsilon_0>0$~\cite{ASAM}, so the
per-coordinate scaling is bounded away from $0$.
Assume each weight magnitude is upper bounded, i.e., $\max_i(|(\vecw_t)_i|+\epsilon_0)\le M$ for all $t\in\{0,\dots,T-1\}$.
Then Lemma~\ref{thm:linear-sandwich} holds for ASAM with
$\underline{\kappa}_\rho = 2\epsilon_0^2\rho^2\mu$ and
$\overline{\kappa}_\rho = M^2\rho^2\!\left(2L_s + \frac{\sigma^2}{B}\right)$.

\subsection{Detecting Training Deviations}
\label{sec:flat:deviation}
We now analyze why \sharpness can detect poor generalization under training deviations that full-batch sharpness measurements miss. Specifically, we formalize a class of training deviations with low \textit{gradient coherence}, which has been empirically linked to poor generalization behavior and observed in known deviations; we then show that these deviations can be invisible to the full-batch sharpness metric but detectable by \sharpness.

\smallskip\noindent\textbf{Gradient coherence measure.}
Recent work~\cite{chang2025a} links flat minima and generalizable solutions to the notion of \textit{data coherence}: how consistently different training examples ``agree'' on the local geometry. Given per-example Hessians $H_i := \nabla^2 \ell_i(\vecw)$, the authors define a coherence score over the matrix $\mathbf{S} \in \mathbb{R}^{N \times N}$ with entries $\mathbf{S}_{ij} := \sqrt{\mathrm{Tr}(H_i H_j)}$. Intuitively, $\mathbf{S}_{ij}$ is large when examples $i$ and $j$ have {aligned curvature}.

Data coherence has also been studied in terms of gradient alignment~\cite{Chatterjee2020Coherent,sankararaman2020impact}, which can be captured using the gradient Gram matrix $G$ with entries $G_{ij} := \langle g_i(\vecw), g_j(\vecw)\rangle$ where $g_i(\vecw) := \nabla \ell_i(\vecw)$ is the per-example gradient. Intuitively, large off-diagonal entries in $G$ indicate that two examples have aligned gradients. We can define a coherence score on $G$ by measuring how strongly per-example gradients agree on the average direction:
\begin{definition}[Gradient Coherence]
\label{def:grad-coh}
Let $g_i(\vecw) := \nabla \ell_i(\vecw)$ and $g(\vecw) := \frac{1}{N}\sum_{i=1}^N g_i(\vecw)$. The \emph{gradient coherence} of model $\vecw$ on dataset $D$ is:
\[
c_g(\vecw) :=\frac{\frac{1}{N^2}\sum_{ij}G_{ij}}{\frac{1}{N}\mathrm{Tr}(G)} =  \frac{\|g(\vecw)\|_2^2}{\frac{1}{N}\sum_{i=1}^N \|g_i(\vecw)\|_2^2} \in [0, 1].
\]
\end{definition}
\noindent Intuitively, high $c_g$ means per-example gradients are largely aligned, while low $c_g$ means substantial cancellation happens across gradients.

\smallskip\noindent\textbf{Training deviations induce low coherence.}
Empirically, training deviations often manifest as poorly aligned (incoherent) gradients/curvature across examples. In backdoor attacks, Yuan et al.~\cite{yuan2024activation} observe a substantially more dispersed distribution of activation gradients within the backdoor target class, indicating reduced within-class alignment of per-example geometry. Separately, Garg et al.~\cite{garg2024memorization} show that memorized examples correlate with high per-example local curvature, suggesting strong per-example heterogeneity rather than shared curvature structure.
Motivated by these observations, we formalize the class of training deviations using gradient coherence:
\begin{assumption}[Low-Coherence Deviations]
\label{as:low-coh}
A training deviation produces a model $\vecw$ that achieves low training loss on a clean dataset $D$ of size $N$ with small gradient $\|g(\vecw)\|_2 \leq \varepsilon$ and low gradient coherence $c_g(\vecw) \leq c_0 \ll 1$.
\end{assumption}
Our assumption says deviations produce models with low training loss and small gradient, yet these models have hidden poor generalization behavior manifested through low gradient coherence.

\smallskip\noindent\textbf{\Sharpness detects low gradient coherence.}
We now demonstrate that \sharpness can detect low-coherence deviations in cases where static full-batch sharpness cannot. Consider a single step of \sharpness measuring the mini-batch SAM sharpness $\mathsf{S}_\xi=\rho\|g_\xi(\vecw)\|_2$ and a static, full-batch SAM sharpness metric $\mathsf{S}=\rho\|g(\vecw)\|_2$. To compare the scale of the random mini-batch signal with the deterministic full-batch signal, define the root-mean-square (RMS) mini-batch sharpness
$\mathsf{S}_{\mathrm{rms}}(\vecw)
:=
\sqrt{\mathbb{E}_\xi[\mathsf{S}_\xi^2]}.$
The next theorem shows that under low-coherence deviations, the full-batch sharpness $\mathsf{S}$ can be small while the RMS mini-batch sharpness remains large. Moreover, the squared RMS gap
$\mathsf{S}_{\mathrm{rms}}^2-\mathsf{S}^2$ grows as the batch size $B$ decreases.

\begin{theorem}[Detection Gap]
\label{thm:detection-gap}
Let $\vecw$ be a model produced by a training deviation satisfying Assumption~\ref{as:low-coh}. Let $\xi$ be a mini-batch of size $B$ sampled uniformly from $[N]$ without replacement and $g_i := \nabla \ell_i(\vecw)$. Then the squared RMS gap between mini-batch sharpness and full-batch sharpness is:
\begin{align}
\label{eq:gap-diff}
\mathsf{S}_{\mathrm{rms}}^2 -\mathsf{S}^2
\;=\;
\bigl(\;\rho^2\cdot \frac{N-B}{B(N-1)}\cdot\frac1N\sum_{i=1}^N\|g_i\|_2^2\;\bigr)\;\cdot\;\bigl(1-c_g(\vecw)\bigr).
\end{align}
In particular, when $B\ll N$ and $c_g(\vecw)\ll 1$, this gap is dominated by
\begin{align}
\label{eq:dom}
\mathsf{S}_{\mathrm{rms}}^2-\mathsf{S}^2
\;\approx\;
\rho^2\cdot \frac{1}{B}\cdot \frac1N\sum_{i=1}^N\|g_i\|_2^2.
\end{align}
Moreover, if $\|g(\vecw)\|_2\le \varepsilon$ and $\varepsilon^2 \ll \frac{1}{B}\cdot \frac1N\sum_{i=1}^N\|g_i\|_2^2$, then the full-batch sharpness is negligible relative to the RMS mini-batch sharpness, i.e., $\mathsf{S}\ll \mathsf{S}_{\mathrm{rms}}.$
\end{theorem}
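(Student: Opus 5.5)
The plan is to compute $\mathbb{E}_\xi\|g_\xi\|_2^2$ exactly, using the standard variance formula for sampling without replacement, and then read off the three claims. Write $g_\xi = \frac1B\sum_{i\in\xi} g_i$. Since $\mathsf{S}_\xi=\rho\|g_\xi\|_2$ and $\mathsf{S}=\rho\|g\|_2$, we have $\mathsf{S}_{\mathrm{rms}}^2-\mathsf{S}^2=\rho^2\bigl(\mathbb{E}_\xi\|g_\xi\|_2^2-\|g\|_2^2\bigr)$. Because $\mathbb{E}_\xi[g_\xi]=g$ by symmetry, this difference is $\rho^2\,\mathbb{E}_\xi\|g_\xi-g\|_2^2$, i.e. $\rho^2$ times the trace of the covariance of the sample mean.

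\textbf{Step 1 (inclusion probabilities).} Expand $\|g_\xi\|_2^2=\frac1{B^2}\sum_{i,j}\mathbf{1}[i\in\xi]\mathbf{1}[j\in\xi]\,G_{ij}$. Under uniform sampling without replacement, $\Pr[i\in\xi]=B/N$ and, for $i\ne j$, $\Pr[i,j\in\xi]=\frac{B(B-1)}{N(N-1)}$. Hence
\[
\mathbb{E}\|g_\xi\|_2^2=\frac{1}{BN}\sum_i\|g_i\|_2^2+\frac{B-1}{BN(N-1)}\Bigl(\sum_{i,j}G_{ij}-\sum_i\|g_i\|_2^2\Bigr).
\]
Substitute $\sum_{i,j}G_{ij}=N^2\|g\|_2^2$ and write $A:=\frac1N\sum_i\|g_i\|_2^2$. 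Collecting terms gives $\mathbb{E}\|g_\xi\|_2^2-\|g\|_2^2=\frac{N-B}{B(N-1)}\bigl(A-\|g\|_2^2\bigr)$. The coefficient of $A$ is $\frac{1}{B}-\frac{B-1}{B(N-1)}=\frac{N-B}{B(N-1)}$, and the coefficient of $\|g\|^2$ is $\frac{N(B-1)}{B(N-1)}-1=-\frac{N-B}{B(N-1)}$. This is the routine algebra to check.

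\textbf{Step 2 (coherence).} By Definition~\ref{def:grad-coh}, $\|g\|_2^2=c_g(\vecw)\,A$, so $A-\|g\|_2^2=A(1-c_g)$. Multiplying by $\rho^2$ yields \eqref{eq:gap-diff} exactly.

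\textbf{Step 3 (approximations).} For $B\ll N$ the prefactor satisfies $\frac{N-B}{B(N-1)}=\frac1B(1-O(B/N))\approx\frac1B$, and $c_g\le c_0\ll1$ gives $1-c_g\approx1$; together these give \eqref{eq:dom}. For the final claim, write $\mathsf{S}_{\mathrm{rms}}^2=\mathsf{S}^2+(\text{gap})\ge\rho^2\frac1B A(1-O(B/N))(1-c_0)$. Meanwhile $\mathsf{S}^2\le\rho^2\varepsilon^2$, which is $\ll\rho^2 A/B$ by hypothesis. Therefore $\mathsf{S}^2/\mathsf{S}_{\mathrm{rms}}^2\ll1$, and taking square roots gives $\mathsf{S}\ll\mathsf{S}_{\mathrm{rms}}$.

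The only real obstacle is the bookkeeping in Step 1: getting the finite-population correction $\frac{N-B}{N-1}$ right, including the diagonal versus off-diagonal split. Everything else follows directly from the definition of $c_g$. The asymptotic ``$\approx$'' and ``$\ll$'' statements should be stated as explicit inequalities with $(1-O(B/N))(1-c_0)$ factors, so that they are rigorous.
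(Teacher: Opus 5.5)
Your proposal is correct and takes essentially the same route as the paper. The paper's Lemma~\ref{lem:mb-decomp} does the same inclusion-probability expansion with the diagonal/off-diagonal split, uses the identity $\sum_{i\ne j}\langle g_i,g_j\rangle=N^2\|g\|_2^2-NS_g^2$, then substitutes $c_g$ and reads off the approximations. Your last step, which lower-bounds $\mathsf{S}_{\mathrm{rms}}^2$ by the gap with explicit $(1-\frac{B-1}{N-1})(1-c_0)$ factors, is a slightly more careful version of the paper's $\approx$ chain; like the paper, it needs $B$ bounded away from $N$, since the gap vanishes at $B=N$.
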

\noindent The proof appears in Appendix~\ref{app:detection-gap-proof}.

\smallskip\noindent\textbf{Implications for \sharpness.}
Intuitively, Theorem~\ref{thm:detection-gap} shows that full-batch sharpness only sees the \emph{average} gradient, while mini-batch sharpness is influenced by the \emph{typical per-example} gradient magnitude. Thus, if per-example gradients disagree, they can cancel in the full-batch average, making $\|g(\vecw)\|_2$ small even when the per-example gradients themselves have large norm. In other words, full-batch sharpness can average away high per-example gradient signals, making low-coherence deviations difficult to detect.

In contrast, these deviations can be detected by mini-batch sharpness. Eq.~\ref{eq:dom} shows that under low-coherence deviations, the squared RMS gap is controlled by the average per-example gradient scale $\frac1N\sum_i\|g_i\|_2^2$ and grows as $\approx 1/B$. Equivalently, the RMS mini-batch sharpness scale grows as $\approx 1/\sqrt{B}$. Thus, even when $\mathsf{S}$ is small, the mini-batch second-moment signal $\mathbb{E}_\xi[\mathsf{S}_\xi^2]$ can remain large, and using a smaller batch size makes this gap easier to detect.

Finally, \sharpness can amplify this one-step RMS gap by aggregating mini-batch sharpness over $T$ steps. Results from~\cite{chang2025a} suggest that low-coherence solutions are less stable under SAM-like dynamics; since \sharpness includes a SAM-style probing trajectory, low-coherence models are more likely to exhibit larger fluctuation or escape behavior than coherent ones, making them more detectable.

\subsection{Empirical Analysis of \SHARPNESS}
\label{sec:flat:corr}

Our theoretical analysis provides evidence that by measuring \textit{dynamic stability} rather than a \textit{static snapshot}, \sharpness could serve as a more robust and reliable predictor of generalization. We now validate \sharpness against Desideratum~\ref{item:desiderata-predictive} by evaluating its empirical correlation with the generalization gap.

\smallskip\noindent\textbf{Setup.}
We mimic the experimental setup of the large-scale experiment by Jiang et al.~\cite{jiang_fantastic_2019} and train a large grid of models on CIFAR-10 and CIFAR-100~\cite{cifar10} with different optimizers, model architectures, and hyperparameters. For CIFAR-10, we use VGG with different sizes: {VGG-13-BN, VGG-16-BN, and VGG-19-BN~\cite{simonyan2015deepconvolutionalnetworkslargescale}}; for CIFAR-100, we use WideResNet28-10~\cite{zagoruyko2016wide}. We train these models using the optimizers {SGD, Adam, SAM~\cite{sam}, ASAM~\cite{ASAM}}. For Adam, we choose learning rates $\{0.001, 0.0005, 0.00032, 0.0001\}$, while for the other optimizers we use $\{0.1, 0.032, 0.01, 0.05\}$. We sweep weight decay in $\{0.0001, 0.00005\}$, batch size in $\{32, 64, 128\}$, and dropout rate in $\{0.0, 0.25, 0.5\}$. All models are trained to a 0.01 cross-entropy loss (estimated on 100 mini-batches), and we discard any that fail to converge within 200 epochs.
For each converged model, we estimate the generalization gap using the difference between the test and training losses $\operatorname{Gap}(\vecw) \approx \LLL_{D_{test}}(\vecw) - \LLL_{D_{train}}(\vecw)$ and compute its correlation with \sharpness, ASAM sharpness (\textbf{ASAM}), and the best-performing magnitude-aware worst-case sharpness (\textbf{Worst-Case}) reported by Jiang et al.~\cite{jiang_fantastic_2019}.
Following~\cite{jiang_fantastic_2019}, we report Spearman's $\rho$, Kendall's $\tau$, and the granulated Kendall's coefficient $\Psi$.

\begin{table}[!t]
\centering
\small
\begin{tabular}{l @{\hskip1pt} ccc}
\toprule
\multirow{2}{*}{\textbf{Sharpness Metric}} & \multicolumn{3}{c}{\textbf{Correlation with Generalization Gap}} \\
\cmidrule(lr){2-4}

& \textbf{Spearman $\rho$} & \textbf{Kendall $\tau$} & \textbf{Kendall $\Psi$}  \\
\midrule
\textbf{Ours} $(B=16, T=5)$ & \textbf{0.902} & \textbf{0.702}  & \textbf{0.694} \\
\textbf{Ours} $(B=8, T=5)$  & \textbf{0.811} & \textbf{0.712}  & \textbf{0.676}\\
\cmidrule(lr){1-4}
\textbf{ASAM} $(B=256)$          & 0.581 & 0.264  & 0.194\\
\textbf{ASAM} $(B=|D|)$     & 0.639  & 0.433          & 0.411\\
\cmidrule(lr){1-4}
\textbf{Worst-Case} ($B=|D|$)                & 0.601  & 0.577 & 0.552\\
\bottomrule
\end{tabular}
\caption{Correlation between different sharpness measures and generalization gap on a grid of $1{,}152$ models with different architectures, optimizers, and hyperparameters. $B$ denotes the batch size used for the sharpness computation. Our \sharpness measure has a significantly stronger correlation than the static magnitude-aware worst-case sharpness (\textbf{Worst-Case}) and ASAM sharpness (\textbf{ASAM}) baselines. 
}

\label{tab:corr}
\end{table}
\smallskip\noindent\textbf{Results and discussion.}
Table~\ref{tab:corr} shows all three correlation coefficients for our \sharpness measure (\textbf{Ours}) compared against several existing sharpness measures. Our \sharpness measure shows a significantly stronger correlation with the generalization gap than all other sharpness measures.
These findings provide empirical support for our theoretical analysis and validate our core insight: reframing sharpness from a static geometric property to a measure of dynamic stability yields a metric that is more predictive of generalization than existing approaches.

This section confirms that \sharpness meets Desideratum~\ref{item:desiderata-predictive}, showing a stronger correlation with generalization than any static baseline. We next proceed to validate it against Desideratum~\ref{item:desiderata-robust} (robustness under training deviations).

\section{\SHARPNESS for Model Certification}
\label{sec:exp}
The previous section demonstrated the high correlation between \sharpness and generalization, but a certification metric must also satisfy our two remaining desiderata: it must be robust against training deviations (Desideratum~\ref{item:desiderata-robust}) and efficiently computable (Desideratum~\ref{item:desiderata-efficient}). In this section, we explore these two properties. First, in Section~\ref{sec:exp:robust}, we address Desideratum~\ref{item:desiderata-robust} by showing that \sharpness consistently detects model failures that result from training deviations, including ones that cannot be detected by test accuracy or existing sharpness measures. 

Next, in Sections~\ref{sec:exp:audit} and~\ref{sec:exp:certify}, we explore efficiency in the context of two related applications: \emph{model auditing} and \emph{model certification}. We define model auditing using a single algorithm, $\mathsf{Audit}$, that takes as input a model and its training data and outputs a binary decision indicating whether or not the model is satisfactory. (Alternatively, we could imagine $\mathsf{Audit}$ outputting a confidence score or other numeric value. In Section~\ref{sec:exp:certify}, we discuss a methodology for identifying thresholds for \sharpness that verifiers could then use in predicates for producing a binary decision.) In our experiments, we show that this algorithm can be run up to $4\times$ faster with \sharpness than with test-accuracy evaluation, while also obtaining better correlation with generalization and robustness.

The $\mathsf{Audit}$ algorithm can be run by a model provider wanting to audit its own model before releasing it, but it is not appropriate for an external verifier to whom the model provider might not want to reveal the training data. Instead, we capture this setting of model \emph{certification} as a pair of algorithms, $\mathsf{Prove}$ and $\mathsf{Verify}$, where $\mathsf{Prove}$ takes as input the model and training data and outputs a proof, and $\mathsf{Verify}$ takes as input the model and the proof and again outputs either a binary decision (in keeping with the traditional model for zero knowledge and other cryptographic primitives) or a numeric score. Here, we demonstrate that proving \sharpness in zero knowledge is up to $80{,}000\times$ more efficient than proving the entire training process.

\smallskip\noindent\textbf{Testbed.}
All of our experiments are implemented in Python using PyTorch and run on a Google Colab A100 runtime with 83.5 GB of RAM and a single NVIDIA A100 GPU. For SAM training, we adopt the code provided by~\cite{latesam}. For backdoor attacks, we use the toolbox from Li et al.~\cite{li2023backdoorbox}.

\smallskip\noindent\textbf{Implementation details of sharpness measures.}
Unless specified otherwise, we set $B=16$, $T=10$, $\mathit{lr}=0.0001$, $\rho=0.05$, and use ASAM sharpness as the per-step sharpness function in \sharpness.
The static sharpness baselines we consider are also magnitude-aware. Specifically, we consider magnitude-aware worst-case sharpness, which we compute using the gradient-ascent algorithm from Jiang et al.~\cite[Algorithm 3]{jiang_fantastic_2019}; we also consider ASAM sharpness (Def.~\ref{def:approx_adapt_sharpness}). We sometimes refer to these sharpness measures as ``static'' to contrast them with the dynamic nature of our \sharpness measure.

\subsection{\SHARPNESS is More Reliable \mbox{under} Training Deviations}
\label{sec:exp:robust}

To evaluate reliability under training deviations, we run a set of \emph{distinguishability} experiments that mimic common auditing and certification settings.
Each experiment involves a benign model $M_b$ paired with a faulty model $M_f$, matched to (nearly) the same \emph{observed accuracy}—the accuracy signal the verifier would naturally inspect in that scenario (e.g., standard test accuracy, pre-quantization accuracy, or accuracy on a small test set). We then ask whether sharpness metrics can flag the faulty model when observed accuracy fails. For each pair, we evaluate observed accuracy together with two metrics computed on the \emph{clean} training set: full-dataset ASAM sharpness (Def.~\ref{def:approx_adapt_sharpness}) and \sharpness. We summarize each metric's distinguishability by its benign/faulty ratio (its value on $M_b$ divided by its value on $M_f$); a smaller ratio indicates better separation.
The failure settings we consider are:
\begin{description}[itemsep=0pt, leftmargin=1pt]
\item{\textbf{Noisy labels.}}
This setting mimics an auditor who observes similar holdout accuracy, yet one model is trained on mislabeled data. We train a VGG-16-BN $M_b$ on clean CIFAR-10 and a faulty $M_f$ on CIFAR-10 with 10\% random label corruption (each corrupted label replaced by a uniformly sampled incorrect class), stopping both runs at 90\% training accuracy. Observed accuracy is measured on the standard test set.

\item{\textbf{Overfitting to spurious features.}}
This setting mimics a scenario where two models appear equally accurate on the standard test set, but one has learned a non-robust shortcut that may fail under distribution shift.
We train a VGG-16-BN $M_b$ on clean CIFAR-10 and a faulty $M_f$ on a modified training set where, with probability $0.5$, we overlay a top-left square whose color is deterministically tied to the label. This spurious feature~\cite{geirhos2020shortcut,spur,spur2} is highly predictive during training but non-causal. Both runs are stopped at 90\% training accuracy, and observed accuracy is standard test accuracy.

\item{\textbf{Backdoors.}}
A backdoor attack~\cite{labelconsistent,sleeper,badnet,9802938} implants a hidden trigger that causes the model to behave normally on benign inputs but switch to a malicious target behavior when the trigger is present.
We train a VGG-16-BN $M_b$ for 200 epochs on the clean CIFAR-10 dataset and a faulty $M_f$ on CIFAR-10 with 1\% poisoned images, following BadNets~\cite{badnet}. Observed accuracy is clean test accuracy.

\item{\textbf{Post-quantization model quality.}} This setting mimics a scenario where two models have similar pre-quantization accuracy, but one is more robust to quantization-induced utility degradation~\cite{liu2021sharpnessaware}.
To construct two models with similar pre-quantization accuracy but different post-quantization performance, we train two VGG-16-BN models, $M_b$ with SAM and $M_f$ with standard SGD, and then perform 4-bit quantization on both models; $M_b$ sees a minimal accuracy drop ($0.920 - 0.914 = 0.006$), while $M_f$ suffers a much larger degradation ($0.918 - 0.893 = 0.025$). We report the pre-quantization accuracy as observed accuracy.

\item{\textbf{Small test sets.}} In many real-world scenarios, test sets can be unreliable, unrepresentative, or maliciously crafted. We simulate a scenario where the available test set is too small to reliably reflect true performance. We train a VGG-16-BN $M_b$ on clean CIFAR-10 and a faulty $M_f$ with $10\%$ label corruption, both for 200 epochs, and craft a 100-sample test set on which both models achieve 90\% accuracy. On the full CIFAR-10 test set, $M_f$ performs substantially worse ($75\%$ vs.\ $88\%$). Observed accuracy is the small test set accuracy. 
\end{description}

The results for all five categories are in Table~\ref{tab:distinguish}, and clearly demonstrate the limitations of static metrics. In all five scenarios, observed accuracy is (nearly) identical for the benign and faulty models, and the static sharpness ratios are much closer to 1 than the directional-sharpness ratios, indicating weaker separation.

\smallskip\noindent\textbf{Experiments on subtler and stronger deviations.}
To further test whether \sharpness remains informative beyond the settings in Table~\ref{tab:distinguish}, we run additional distinguishability experiments in Appendix~\ref{app:exp:distinguish} covering more diverse and challenging scenarios: (i) finer-grained label-noise rates; (ii) an adaptive, sharpness-aware backdoor attack~\cite{he2024sharpnessaware}; and (iii) memorization/backdoor failures on language models. Across these settings, \sharpness remains the strongest signal for benign/faulty separation.

\begin{table}[t]
\centering
\setlength{\tabcolsep}{3pt}
\begin{tabular}{lccc}
\toprule
\multirow{2}{*}{\textbf{Experiment}}
& \multicolumn{2}{c}{\textbf{Sharpness Ratio}}
& \multirow{2}{*}{\textbf{Observed Accuracy Ratio}} \\
\cmidrule(lr){2-3}
& \textbf{Directional} & \textbf{Static} & \\
\midrule
\textbf{Noisy labels}  & \textbf{0.177} & 0.991 & 1.00 \\
\textbf{Overfitting}   & \textbf{0.084} & 0.963 & 1.00 \\
\textbf{Backdoors}     & \textbf{0.061} & 0.587 & 1.00 \\
\textbf{Quantization}   & \textbf{0.240} & 0.812 & 1.00 \\
\textbf{Small test set} & \textbf{0.289} & 0.560 & 1.00 \\
\bottomrule
\end{tabular}
\caption{
  Distinguishability of faulty models under observed accuracy, static ASAM sharpness, and \sharpness. Each entry reports the benign/faulty ratio. Lower ratios indicate stronger separation. Observed accuracy denotes the natural accuracy signal available to the verifier and is experiment-specific.
}
\label{tab:distinguish}
\end{table}

\subsection{Model Quality Auditing}
\label{sec:exp:audit}
\begin{table}[t]
\centering
\begin{tabular}{l c c}
\toprule
\textbf{Metric} & {\textbf{Runtime}} &\textbf{Correlation}\\
& {\textbf{(in s)}} & \\
\midrule
\textbf{Test Accuracy} & 0.92 & --- \\
\textbf{Worst-Case Sharpness~\cite{jiang_fantastic_2019}} & 210.32 &  $\tau = 0.577$ \\
\textbf{Ours} $(B=16,T=10)$ & {\bfseries 0.45} & $\tau = 0.708$  \\
\textbf{Ours} $(B=16,T=5)$ & {\bfseries 0.29} & $\tau = 0.702$ \\
\textbf{Ours} $(B=8,T=10)$ & {\bfseries 0.39} & $\tau = 0.691$\\
\textbf{Ours} $(B=8,T=5)$ & {\bfseries 0.21} & $\tau = 0.712$ \\
\bottomrule
\end{tabular}
\caption{Runtime of evaluating different metrics and their correlation with the generalization gap.
\Sharpness is not only the most efficient metric, faster than even plain test accuracy, but also offers the strongest Kendall's $\tau$ correlation with generalization.}
\label{tab:audittime}
\end{table}

We now turn to the problem of model auditing, in which a verifier wants to measure the quality of a model given access to both the model and its training data. We show that an auditing algorithm based on standard test accuracy fails to detect certain types of flaws that \sharpness detects, and that prior sharpness metrics are too inefficient for this use case. 

\smallskip\noindent\textbf{Efficiency comparison with existing sharpness metrics.}
We measure the runtime of computing \sharpness and compare it against test accuracy and magnitude-aware worst-case sharpness (Def.~\ref{def:adapt_sharpness}) computed using the
gradient-ascent method described by Jiang et al.~\cite{jiang_fantastic_2019}.
Table~\ref{tab:audittime} shows the runtime and correlation for different metrics. When computed on a small batch ($B=8$, $T=5$), \sharpness is over $1{,}000\times$ faster than static sharpness and even $4\times$ faster than computing accuracy over the full test set (0.21 vs. 0.92 seconds). Consistent with Section~\ref{sec:flat:corr}, \sharpness also provides the strongest correlation with generalization.

\smallskip\noindent\textbf{Auditing model privacy via membership inference.}
Membership inference attacks (MIAs)~\cite{DBLP:conf/sp/ShokriSSS17} are a standard auditing tool for assessing privacy leakage: if an attacker can reliably distinguish training points (“members”) from unseen points (“non-members”), then the model reveals information about its training data. Prior work links MIA vulnerability to poor generalization and overfitting~\cite{yeom_privacy_2018,baluta2022membership}, and shows that training deviations can further amplify membership leakage~\cite{wen2024privacy,hu2022membership}. Motivated by our finding that \sharpness provides a more reliable signal of poor generalization under deviations, we ask whether it can also serve as a stronger membership score—i.e., yield better separability between members and non-members—and thus improve privacy auditing.

Most black-box MIAs use per-sample loss or confidence as the decision signal, based on the observation that memorized training examples tend to incur lower loss. We evaluate three per-example scores—loss, static ASAM sharpness, and \sharpness{}—on a VGG-16-BN model trained on CIFAR-10, using a balanced set of 1{,}000 members (training points) and 1{,}000 non-members (test points). Per-example \sharpness is computed by treating the example as the only training data and setting $B=1$. We compare attack performance using threshold-independent AUC and best-threshold accuracy (ACC). As shown in Table~\ref{tab:mia}, \sharpness achieves the highest AUC and ACC, indicating that our dynamic stability test strengthens loss-based MIA privacy auditing by improving member/non-member separability.

\begin{table}[!t]
\centering
\begin{tabular}{l cc}
\toprule
\textbf{MIA Metric} & \textbf{AUC} & \textbf{ACC}  \\
\midrule
\textbf{Loss} & 0.490 & 0.530\\
\cmidrule(lr){1-3}
\textbf{ASAM Sharpness} & 0.532 & 0.539  \\
\textbf{\SHARPNESS (Ours)} & \textbf{0.556} & \textbf{0.570}  \\
\bottomrule
\end{tabular}
\caption{MIA performance when using different per-sample scores as the attack signal.
Higher AUC indicates better separability.
}
\label{tab:mia}
\end{table}

\subsection{Model Quality Certification}
\label{sec:exp:certify}

We finally turn our attention to the problem of model certification, in which a prover wants to convince a verifier of the quality of a model to which the verifier has access (but, crucially, the verifier does not have access to the training data). Given that we have demonstrated the strong correlation between our \sharpness metric and model generalization (and thus quality), we show how to achieve this by generating a zero-knowledge proof (ZKP) of \sharpness.

\begin{figure}[t!]
    \centering
\ifarxiv
\includegraphics[width=0.5\linewidth]{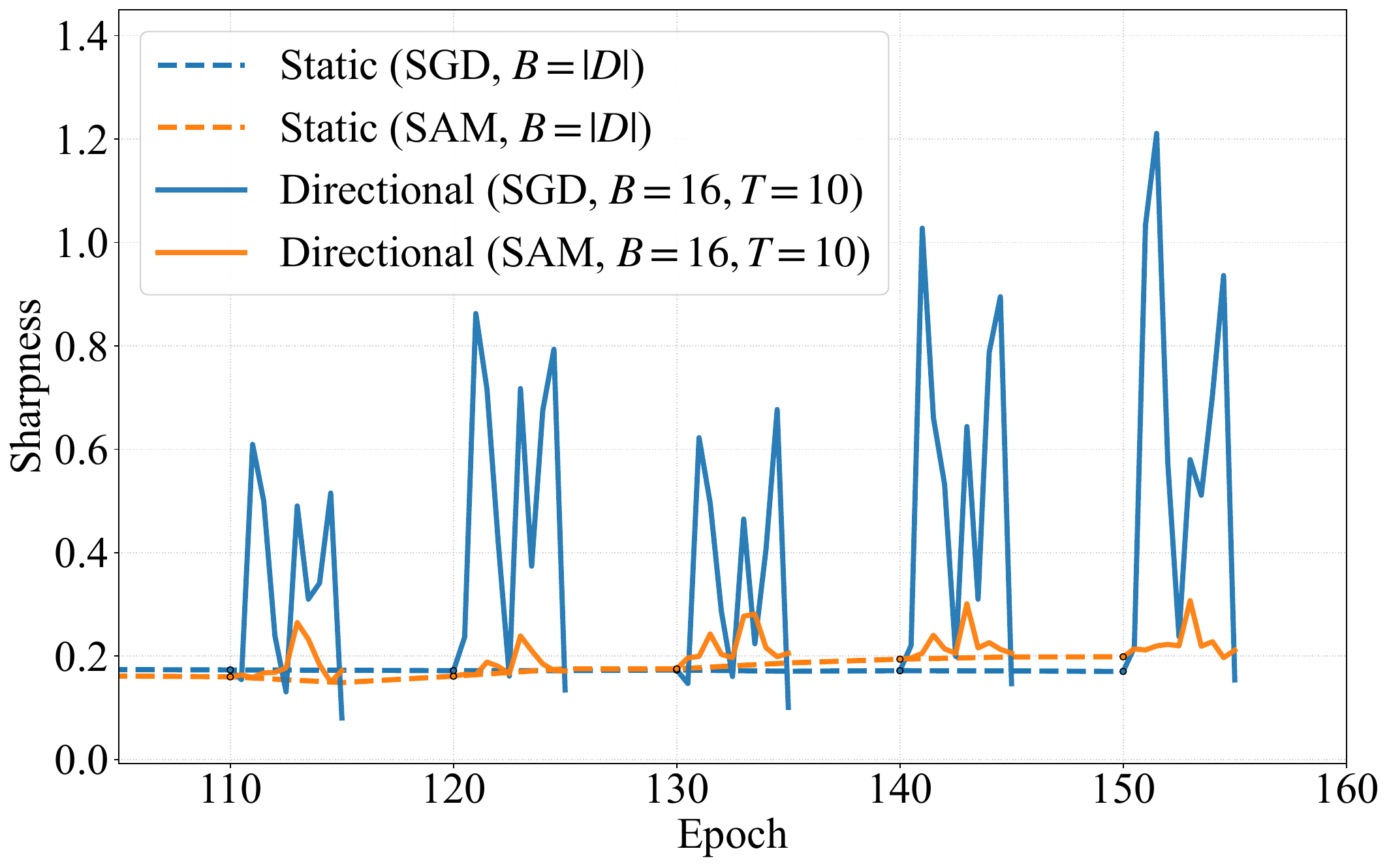}
\else
\includegraphics[width=1\linewidth]{Figures/seed0_bs32_branched_warped_adjusted.pdf}
\fi
\caption{Static ASAM sharpness vs. \sharpness probe trajectories on models trained with SGD and SAM.
Static sharpness incorrectly suggests the SGD model generalizes better at epochs 130, 140, and 150, while \sharpness trajectories reveal the SAM model is more stable under perturbations.
}

\label{fig:sam_erm}
\end{figure}

\begin{table}[!t]
\centering
\begin{tabular}{c ccc}
\toprule
\multirow{2}{*}{\textbf{Models}} & \multicolumn{2}{c}{\textbf{Sharpness}} & \multirow{2}{*}{\textbf{Generalization Gap}} \\
\cmidrule(lr){2-3}
 & \textbf{Ours}& \textbf{Static ASAM}  & \\
\midrule
 \textbf{SAM} &\textbf{1.123}  & \textbf{0.195} & \textbf{0.205} \\
\textbf{SGD} &5.711  & 0.170 & 0.273 \\

\bottomrule
\end{tabular}
\caption{Static ASAM sharpness vs. \sharpness values on two models trained with SAM and SGD.
Static sharpness incorrectly suggests the SGD model has better generalization, while \sharpness correctly identifies the SAM model as the better-generalizing one.} 

\label{tab:samsgd}
\end{table}
\smallskip\noindent\textbf{Effectiveness of \sharpness.}
We first demonstrate the effectiveness of \sharpness compared to existing static sharpness metrics in model certification. In particular, we build on our previous results by considering the specific use case in which a malicious model trainer wants to convince a verifier that a model was trained using SAM (which is expensive) but, in reality, trained the model using standard SGD (which is much cheaper). To simulate this scenario, we train two VGG-16-BN models on CIFAR-10 for 150 epochs. One model was trained with standard SGD, representing the malicious trainer's model, while the other was honestly trained with SAM. After training, the SAM model has a lower generalization gap (0.205) than the SGD model (0.273) and thus better generalization.

Figure~\ref{fig:sam_erm} plots the trajectory of both sharpness metrics over the last $50$ epochs of training. The static ASAM sharpness metric (Def.~\ref{def:approx_adapt_sharpness}, dashed lines) proves to be an unreliable signal, showing both models converging to similar values and incorrectly suggesting the SGD model has better generalization (Table~\ref{tab:samsgd}). In contrast, \sharpness (solid lines) is effective at distinguishing SAM and SGD models at all epochs. The {SAM model} (orange lines) consistently displays low and stable sharpness, indicating it resides in a uniformly flat basin. The {SGD model} (blue lines) exhibits high and erratic sharpness spikes. This shows that while its full-dataset sharpness may appear low, it is stuck in a sharp basin that is immediately revealed by \sharpness.

\smallskip\noindent\textbf{Efficiency of \sharpness.}
Now that we have established \sharpness as more effective for model certification, we begin evaluating the end-to-end performance of proving \sharpness in zero knowledge to demonstrate its efficiency. We compare this cost against a \emph{zero-knowledge proof of training (zkPoT)}~\cite{kaizen,garg2023experimenting}.

To obtain realistic and concrete cost estimates, we extrapolate from the reported results of Kaizen~\cite{kaizen}. Kaizen proves mini-batch gradient-descent training using an optimized GKR-style proof for each gradient update. Its instantiation uses 64-bit fixed-point encoding with 32 fractional bits over $\mathbb{F}_{p^2}$, where $p=2^{61}-1$. Mini-batches are sampled by applying a public pseudorandom permutation to the committed dataset indices and taking consecutive blocks of size $B$. Since both the zkPoT baseline and our ZKP of directional sharpness use the same Kaizen instantiation with parameters chosen for $\lambda=100$-bit security, both inherit a knowledge-soundness error of at most $2^{-100}$ for the corresponding statements. We refer to~\cite{kaizen} for their full protocol and implementation details.

Let $P_B^{\text{SGD}}$ be the cost of proving one SGD iteration on batch size $B$ with prover runtime and proof size taken from~\cite{kaizen}. Since SAM requires twice the computation per step versus SGD~\cite{sam}, we set $P_B^{\text{SAM}} \approx 2 \cdot P_B^{\text{SGD}}$.
To avoid over-penalizing our baseline, we adopt the finding that 1--5 epochs of SAM at the late stage achieve comparable generalization to full SAM training~\cite{latesam} and consider a ``LATE-SAM'' baseline that proves 145 SGD epochs followed by 5 SAM epochs with batch size $B=16$. The total cost for the LATE-SAM baseline is therefore $\frac{155|D|}{B} \cdot P_B^{\text{SGD}}$.
Our ZKP of \sharpness proves $T$ SAM steps plus a small circuit for the fluctuation function. Since the cost of the fluctuation function is small compared with $P_B^{\text{SAM}}$, we estimate our total cost as $(2T) \cdot P_B^{\text{SGD}}$.

Table~\ref{tab:zke2e} reports these cost estimates. A ZKP of \sharpness (running for $T=5$ steps with a $B=8$ batch) is up to $80{,}000\times$ faster than proving the full 150-epoch LATE-SAM training run: a ZKP of LATE-SAM training for VGG-11 would take an estimated $118{,}671$ hours (over 13 years), while our ZKP of \sharpness takes under 90 minutes.

\begin{table}[t]\centering
\begin{tabular}
{@{\hskip0pt}c cc@{\hskip1pt}c cc@{\hskip1pt}}
\toprule
\multirow{4}{*}{\textbf{Models}} & \multicolumn{3}{c}{\textbf{ZKP of \SHARPNESS}}  & \multicolumn{2}{c}{\textbf{ZKP of Training}} \\
& \multicolumn{3}{c}{\textbf{$(B=8,T=5)$}} &  \multicolumn{2}{c}{\textbf{$(B=16)$}} \\
\cmidrule(lr){2-4}\cmidrule(lr){5-6}
& \textbf{Time} & \textbf{Comm.} & \textbf{Speedup}
& \textbf{Time} & \textbf{Comm.}    \\
& \textbf{\footnotesize (min.)} & \textbf{\footnotesize(MB)} & \textbf{\footnotesize ($\times$)} & \textbf{\footnotesize (hr.)} & \textbf{\footnotesize(GB)}  \\
\midrule
\textbf{LeNet}    & 23.63 & 9.85 & $\mathbf{66{,}063}$ & ${26{,}016}$ & ${471.64}$\\
\textbf{AlexNet}    & 55.43 & 11.59 & $\mathbf{69{,}091}$ & ${63{,}830}$ & ${579.72}$\\
\textbf{VGG-11}   & 88.87 & 15.28 & $\mathbf{80{,}119}$ & $118{,}671$ & $751.59$\\
\bottomrule
\end{tabular}
\caption{Cost estimates for ZKPs of \sharpness vs. ZKPs of LATE-SAM training. For the zkPoT baseline, all models are trained for 145 epochs using SGD and switched to SAM for the final 5 epochs. $|\data|=50{,}000$.}
\label{tab:zke2e}
\end{table}

\smallskip\noindent\textbf{Certification decision based on \sharpness.}
Our experiment shows that \sharpness provides a robust signal for distinguishing high-quality models from low-quality ones. In practice, however, a verifier must still choose a threshold or decision rule to convert this signal into a binary ``accept/reject'' decision.

We note that this threshold selection is a limitation of any metric-based certification approach, since the true generalization gap is unobservable. Thus, the goal is not to find a universal constant, but rather a metric reliable enough that practical calibration procedures can be effective. Our theoretical and empirical results show that \sharpness is well suited for this purpose: theoretically, it provides a principled measure of instability that is robust under training deviations (Section~\ref{sec:flat:math} and Section~\ref{sec:flat:deviation}); empirically, it exhibits stronger correlation with generalization (Section~\ref{sec:flat:corr}) and wider separation margins between benign and faulty models (Table~\ref{tab:distinguish}) compared to existing metrics.

These properties enable a straightforward calibration procedure: (i) construct a small calibration set of models of known quality for the task family (e.g., a verifier may already have baseline models trained with SGD or SAM); (ii) select a threshold $\tau$ that separates these models based on an acceptable risk level; (iii) apply $\tau$ to evaluate new models within the same task family. In certification settings where a binary decision is not required, verifiers can also treat the \sharpness value directly as a continuous confidence score, with lower values indicating higher model quality.

\section{Conclusion}

Model quality certification is essential for providing transparency and accountability in machine learning systems. In this work, we use generalization as a proxy for overall model quality and develop a new, efficient methodology for its certification. Our central contribution is a metric we call \sharpness. This measure strongly correlates with generalization, is reliable under existing training deviations, and is efficiently computable. These properties make \sharpness a suitable ingredient in ML model certification.
Therefore, \sharpness 
can play an important role in mitigating attacks in adversarial machine learning. Using \sharpness to devise rigorous defenses against adversarial models, and incorporating these defenses into existing systems, is a natural continuation of this research.

\appendix
\section*{Acknowledgments}
Gefei Tan is supported by a Google PhD Fellowship.
\bibliographystyle{plainurl} 
\bibliography{all}

\section*{Appendix}

\section{Additional Experiments}
\label{app:exp:distinguish}
\begin{table}[!t]
\centering
\begin{tabular}{lccc}
\toprule
\multirow{2}{*}{\textbf{Experiment}}
& \multicolumn{2}{c}{\textbf{Sharpness Ratio}}
& \multirow{2}{*}{\textbf{Observed Acc. Ratio}} \\
\cmidrule(lr){2-3}
& \textbf{Directional} & \textbf{Static} & \\
\midrule
\textbf{SAPA} & \textbf{0.016} & 0.217 & 1.000 \\
\textbf{LM-Mem}     & \textbf{0.083} & 0.664 & 1.003 \\
\textbf{LM-BD}      & \textbf{0.053} & 0.630 & 1.005 \\
\textbf{Noise (1\%)}   & \textbf{0.122} & 0.676 & 0.995 \\
\textbf{Noise (2.5\%)} & \textbf{0.086} & 0.833 & 0.994 \\
\textbf{Noise (5\%) }   & \textbf{0.079} & 0.546 & 0.998 \\
\textbf{Noise (7.5\%)} & \textbf{0.081} & 0.418 & 0.998 \\
\textbf{ResNet-BD}  & \textbf{0.253} & 0.972& 1.000 \\
\bottomrule
\end{tabular}
\caption{
Results of additional distinguishability experiments. Each entry reports the benign/faulty ratio; lower values indicate better separation. \Sharpness provides the strongest separation.
}
\label{tab:additional_distinguish}
\end{table}
We provide additional distinguishability experiments extending Section~\ref{sec:exp:robust}, evaluating whether \sharpness remains informative under subtler, more challenging deviations and on non-vision models.
We follow the same evaluation protocol as in Section~\ref{sec:exp:robust}: for each setting, we train benign/faulty model pairs and ask whether sharpness metrics can separate them when observed accuracy fails. Table~\ref{tab:additional_distinguish} reports the benign/faulty ratios for the same metrics as in Section~\ref{sec:exp:robust}.

\smallskip\noindent\textbf{Backdoor on ResNet-18 (ResNet-BD).}
We repeat the BadNets backdoor experiment on ResNet-18. The faulty model is trained on CIFAR-10 with 1\% poisoned images. Both benign and faulty models are trained for 160 epochs. The observed accuracy reported is the clean test accuracy.

\smallskip\noindent\textbf{Varying label noise (Noise).}
We repeat the label-noise experiment with varying noise rates to evaluate the sensitivity of \sharpness to weaker forms of data-quality degradation. The observed accuracy reported is the clean test accuracy.

\smallskip\noindent\textbf{Sharpness-aware poisoning attack (SAPA).} SAPA~\cite{he2024sharpnessaware} uses SAM-style perturbations during poison generation to optimize the adversarial objective, making the poison more robust to retraining uncertainty. We train a benign ResNet-18 on clean CIFAR-10 using standard SGD, and a faulty ResNet-18 using SAPA's hidden-trigger backdoor attack with $1\%$ of the training set poisoned, perturbation budget $\epsilon=16/255$, and sharpness radius $\rho=0.05$. Both models are trained for 160 epochs. The observed accuracy is the clean test accuracy.

\smallskip\noindent\textbf{Backdoor (LM-BD) and memorization (LM-Mem) on language models.}
We extend our distinguishability tests to transformer language models using TinyMem~\cite{sakarvadia2025mitigating}, a suite of lightweight GPT-2-style models. Using the 4-layer multiplicative architecture, we train a benign model $M_b$ on clean data and a faulty model $M_f$ on data poisoned with either the \textsf{backdoor} artifact~\cite[Def.\ 2.3]{sakarvadia2025mitigating}
or the \textsf{noise} artifact~\cite[Def.\ 2.2]{sakarvadia2025mitigating}, which inject a trigger pattern and a random perturbation, respectively, into a small fraction of the
training sequences (see~\cite{sakarvadia2025mitigating} for details). In both settings, $M_f$ memorizes the injected artifacts while matching the observed (clean-test) accuracy of $M_b$: the backdoored $M_f$ attains $99.3\%$ backdoor
memorization at $96.92\%$ observed accuracy (vs.\ $97.41\%$ for $M_b$), and the noise-trained $M_f$ memorizes $37.4\%$ of the artifacts at $97.11\%$ observed accuracy (vs.\ $97.40\%$ for $M_b$).

\section{{Proofs of Lemmas and Theorems}}
We first establish two direct consequences of our assumptions. Under Assumption~\ref{as:PL}, we have
\begin{equation}
2\mu \LLL(\vecw)\le \|g(\vecw)\|_2^2 \le 2L_s\LLL(\vecw),
\label{eq:pl-smooth-sandwich}
\end{equation}
where the upper bound is a standard result from smoothness~\cite[Lemma~1]{khaled2022better} with $\inf_{\vecw}\LLL(\vecw)=0$.

Additionally, under Assumption~\ref{as:noise}, the gradient noise for a single sample $i$ satisfies
$\mathbb{E}_{i}\|\zeta_i^1(\vecw)\|_2^2\le \sigma^2\LLL(\vecw).$
For a mini-batch $\xi$ of size $B$ sampled uniformly without replacement,
we have
\[
\mathbb{E}_\xi\|\zeta^B(\vecw,\xi)\|_2^2
=
\frac{N-B}{B(N-1)}
\mathbb{E}_{i}\|\zeta_i^1(\vecw)\|_2^2.
\]
Since $(N-B)/(N-1)\le 1$, we obtain
\begin{equation}
\mathbb{E}_\xi\|\zeta^B(\vecw,\xi)\|_2^2
\le \frac{\sigma^2}{B}\LLL(\vecw).
\label{eq:batch-noise}
\end{equation}

\subsection{Proof of Lemma~\ref{thm:linear-sandwich}}
\label{app:proof:sandwich}
\begin{proof}
For $p=2$, the SAM sharpness on mini-batch $\xi_t$ is
\[
s_t
= \rho\|\nabla \LLL_{\xi_t}(\vecw_t)\|_2
= \rho\|g(\vecw_t)+\zeta_t\|_2,
\]
where $\zeta_t:=g_{\xi_t}(\vecw_t)-g(\vecw_t)$. Conditioning on $\vecw_t$, and using the fact that the mini-batch gradient is unbiased, we have
\[
\mathbb{E}[s_t^2\mid \vecw_t]
= \rho^2\left(\|g(\vecw_t)\|_2^2
+\mathbb{E}[\|\zeta_t\|_2^2\mid \vecw_t]\right).
\]
For the lower bound, drop the nonnegative noise term and apply the PL lower bound
in Eq.~\ref{eq:pl-smooth-sandwich}:
\[
\mathbb{E}[s_t^2\mid \vecw_t]
\ge \rho^2\|g(\vecw_t)\|_2^2
\ge 2\rho^2\mu\,\LLL(\vecw_t).
\]
For the upper bound, apply the upper bound in Eq.~\ref{eq:pl-smooth-sandwich}
and the mini-batch noise bound Eq.~\ref{eq:batch-noise}:
\[
\mathbb{E}[s_t^2\mid \vecw_t]
\le \rho^2\left(2L_s+\frac{\sigma^2}{B}\right)\LLL(\vecw_t).
\]
This completes the proof.
\end{proof}

\subsection{Proof of Theorem~\ref{thm:sharpness-dynamics}}
\label{app:proof:dynamic}
\begin{proof}
Taking expectations in Lemma~\ref{thm:linear-sandwich} and using the tower rule
gives
\begin{equation}
\label{small-sandwich}
\underline{\kappa}_\rho\,\mathbb{E}[\LLL(\vecw_t)]
\le
\mathbb{E}[s_t^2]
\le
\overline{\kappa}_\rho\,\mathbb{E}[\LLL(\vecw_t)].
\end{equation}

\noindent\textbf{\textit{(a) Stability implies bounded sharpness.}}
SAM-stability gives
$\mathbb{E}[\LLL(\vecw_t)]\le C\LLL(\vecw_0)$ for all $t$. Substituting into the
upper bound in Eq.~\ref{small-sandwich} gives
\[
\mathbb{E}[s_t^2]
\le
\overline{\kappa}_\rho C\LLL(\vecw_0).
\]

\noindent\textbf{\textit{(b) Exponential sharpness growth certifies instability.}}
The upper side of the sandwich bound in Eq.~\ref{small-sandwich} implies
\[
\mathbb{E}[\LLL(\vecw_t)]
\ge
\frac{1}{\overline{\kappa}_\rho}\mathbb{E}[s_t^2].
\]
Therefore, if $\mathbb{E}[s_t^2]\ge K \alpha^t\LLL(\vecw_0)$ for all $t\ge t_0$, then
\[
\mathbb{E}[\LLL(\vecw_t)]
\ge
\frac{K}{\overline{\kappa}_\rho}\alpha^t\LLL(\vecw_0).
\]
Since $\alpha>1$ and $\LLL(\vecw_0)>0$, this loss trajectory is unbounded relative
to $\LLL(\vecw_0)$ and violates the condition in Def.~\ref{def:sam_stability}. Hence $\vecw^*$ is SAM-unstable.
\end{proof}

\subsection{Proof of Theorem~\ref{thm:detection-gap}}
\label{app:detection-gap-proof}
We first prove the following lemma for mini-batch gradients.

\begin{lemma}
\label{lem:mb-decomp}
Let $\xi$ be a mini-batch of size $B$ sampled uniformly without replacement from
a dataset of size $N$. Write $S_g^2:=\frac1N\sum_{i=1}^N\|g_i(\vecw)\|_2^2$ and define $c_g(\vecw):=\|g(\vecw)\|_2^2/S_g^2$ (Def.~\ref{def:grad-coh}). Assuming $S_g^2>0$,
\[
\mathbb{E}_\xi\|g_\xi(\vecw)\|_2^2
=
\|g(\vecw)\|_2^2
+
\frac{N-B}{B(N-1)}(1-c_g(\vecw))S_g^2.
\]
\end{lemma}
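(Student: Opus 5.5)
We prove the lemma with a bias--variance decomposition of the mini-batch gradient followed by the standard finite-population variance formula for sampling without replacement. Write $g_\xi := \frac1B\sum_{i\in\xi} g_i$ and $\zeta := g_\xi - g$. Since every index is included in $\xi$ with probability $B/N$, we have $\mathbb{E}_\xi[g_\xi]=g$. Expanding $\|g+\zeta\|_2^2$ and noting that the cross term $2\langle g,\mathbb{E}_\xi\zeta\rangle$ vanishes gives
\[
\mathbb{E}_\xi\|g_\xi\|_2^2=\|g\|_2^2+\mathbb{E}_\xi\|\zeta\|_2^2 .
\]
It then suffices to compute the noise term $\mathbb{E}_\xi\|\zeta\|_2^2$ exactly.

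For the noise term, let $a_i := g_i - g$ be the centered per-example gradients, so that $\sum_i a_i=0$. Write $\zeta=\frac1B\sum_{i=1}^N \mathbf 1[i\in\xi]\,a_i$. Under uniform sampling without replacement we have $\Pr[i\in\xi]=B/N$, and for $i\ne j$, $\Pr[i,j\in\xi]=\frac{B(B-1)}{N(N-1)}$. Hence
\[
B^2\,\mathbb{E}\|\zeta\|_2^2=\frac BN\sum_i\|a_i\|_2^2+\frac{B(B-1)}{N(N-1)}\sum_{i\ne j}\langle a_i,a_j\rangle .
\]
Because $\sum_i a_i=0$, we have $\sum_{i\ne j}\langle a_i,a_j\rangle=-\sum_i\|a_i\|_2^2$. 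Collecting terms gives the coefficient
\[
\frac BN\Bigl(1-\frac{B-1}{N-1}\Bigr)=\frac{B(N-B)}{N(N-1)},
\]
and therefore
\[
\mathbb{E}\|\zeta\|_2^2=\frac{N-B}{B(N-1)}\cdot\frac1N\sum_i\|a_i\|_2^2 .
\]
This is the same finite-population factor the paper already quotes before Eq.~\ref{eq:batch-noise}; here we derive it as an identity rather than a bound.

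Next we express the per-example variance through coherence. Expanding the square and using $\frac1N\sum_i g_i=g$,
\[
\frac1N\sum_i\|g_i-g\|_2^2=S_g^2-\|g\|_2^2 .
\]
Since $S_g^2>0$ by hypothesis, Def.~\ref{def:grad-coh} gives $\|g\|_2^2=c_g S_g^2$, so this quantity equals $(1-c_g(\vecw))S_g^2$. Substituting into the decomposition yields the claimed identity. As a consequence, $c_g\in[0,1]$, which is the bound asserted in Def.~\ref{def:grad-coh}.

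The only delicate step is the pairwise inclusion probability and the centering trick that turns the off-diagonal sum into $-\sum_i\|a_i\|_2^2$. I would double-check it against the edge cases: $B=N$ should give zero noise, and $B=1$ should give the full per-example variance.
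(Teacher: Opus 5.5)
Your proof is correct and is essentially the paper's argument: both expand the squared norm of the mini-batch average and use the inclusion probabilities $\Pr[i\in\xi]=B/N$ and $\Pr[i,j\in\xi]=B(B-1)/(N(N-1))$. The only difference is bookkeeping. You first split off $\|g\|_2^2$ and work with centered gradients, using $\sum_{i\ne j}\langle a_i,a_j\rangle=-\sum_i\|a_i\|_2^2$, whereas the paper expands the uncentered $g_i$ directly via $\sum_{i\ne j}\langle g_i,g_j\rangle=N^2\|g\|_2^2-NS_g^2$; your ordering has the small side benefit of giving the paper's finite-population noise formula as an exact identity.
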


\begin{proof}
Expanding the squared norm and using
$\Pr[i\in\xi]=B/N$ and
$\Pr[i,j\in\xi]=B(B-1)/(N(N-1))$ for $i\ne j$,
\begin{align*}
\mathbb{E}_\xi\|g_\xi\|_2^2
&=
\frac{1}{B^2}
\left(
\frac{B}{N}\sum_i \|g_i\|_2^2
+
\frac{B(B-1)}{N(N-1)}
\sum_{i\ne j}\langle g_i,g_j\rangle
\right)  \\
&=
\frac1BS_g^2
+
\frac{B-1}{B}\cdot
\frac{1}{N(N-1)}
\sum_{i\ne j}\langle g_i,g_j\rangle .
\end{align*}
Since
\[
\sum_{i\ne j}\langle g_i,g_j\rangle
=
N^2\|g\|_2^2-NS_g^2,
\]
substitution gives
\[
\mathbb{E}_\xi\|g_\xi\|_2^2
=
\|g\|_2^2+
\frac{N-B}{B(N-1)}(S_g^2-\|g\|_2^2).
\]
Plugging in $S_g^2-\|g\|_2^2=(1-c_g(\vecw))S_g^2$ yields the lemma.
\end{proof}

\begin{proof}[Proof of Theorem~\ref{thm:detection-gap}]
For $p=q=2$,
\[
\mathsf{S}=\rho\|g(\vecw)\|_2,
\qquad
\mathsf{S}_\xi=\rho\|g_\xi(\vecw)\|_2,
\qquad
\mathsf{S}_{\mathrm{rms}}^2=\mathbb{E}_\xi[\mathsf{S}_\xi^2].
\]
Therefore,
\[
\mathsf{S}_{\mathrm{rms}}^2-\mathsf{S}^2
=
\mathbb{E}_\xi[\mathsf{S}_\xi^2]-\mathsf{S}^2
=
\rho^2\left(\mathbb{E}_\xi\|g_\xi(\vecw)\|_2^2-\|g(\vecw)\|_2^2\right).
\]
Applying Lemma~\ref{lem:mb-decomp} yields Eq.~\ref{eq:gap-diff}:
\[
\mathsf{S}_{\mathrm{rms}}^2 - \mathsf{S}^2
=
\rho^2\cdot \frac{N-B}{B(N-1)} \cdot \bigl(1-c_g(\vecw)\bigr)\cdot S_g^2.
\]

If $B\ll N$, then $\frac{N-B}{B(N-1)}\approx \frac1B$. If also $c_g(\vecw)\ll 1$, then $1-c_g(\vecw)\approx 1$, and hence we have Eq.~\ref{eq:dom}:
\[
\mathsf{S}_{\mathrm{rms}}^2-\mathsf{S}^2
\approx
\rho^2\frac1BS_g^2
=
\rho^2\frac1B\cdot
\frac1N\sum_{i=1}^N\|g_i(\vecw)\|_2^2.
\]

Finally, if $\|g(\vecw)\|_2\le \varepsilon$, then
$\mathsf{S}^2\le \rho^2\varepsilon^2$. Therefore, whenever
$\varepsilon^2\ll S_g^2/B$, we have
\[
\mathsf{S}^2
\ll
\rho^2 S_g^2/B
\approx
\mathsf{S}_{\mathrm{rms}}^2-\mathsf{S}^2.
\]
Hence $\mathsf{S}^2\ll \mathsf{S}_{\mathrm{rms}}^2$ and
$\mathsf{S}\ll \mathsf{S}_{\mathrm{rms}}$.
\end{proof}

\end{document}